%% file: example_paper.tex
\documentclass{article}

\usepackage{microtype}
\usepackage{graphicx}
\usepackage{subcaption}
\usepackage{booktabs} 
\usepackage[most]{tcolorbox}
\usepackage{enumitem}

\usepackage{hyperref}

\usepackage[accepted]{icml2026}

\usepackage{amsmath}
\usepackage{amssymb}
\usepackage{mathtools}
\usepackage{amsthm}

\usepackage[capitalize,noabbrev]{cleveref}

\theoremstyle{plain}
\newtheorem{theorem}{Theorem}[section]

\newtheorem{corollary}[theorem]{Corollary}
\theoremstyle{definition}

\newtheorem{assumption}[theorem]{Assumption}
\theoremstyle{remark}

\usepackage[textsize=tiny]{todonotes}

\icmltitlerunning{Calibrated Reinforcement Learning for Multi-Attempt Chain-of-Thought}
\input{notations}

\begin{document}

\twocolumn[
  \icmltitle{Learning to Correct: Calibrated Reinforcement Learning for Multi-Attempt Chain-of-Thought}



  \icmlsetsymbol{equal}{*}

    \begin{icmlauthorlist}
    \icmlauthor{Emrullah Ildiz}{equal,umich}
    \icmlauthor{Halil Alperen Gozeten}{equal,umich}
    \icmlauthor{Ege Onur Taga}{umich}
    \icmlauthor{Samet Oymak}{umich,google}
    \end{icmlauthorlist}

  \icmlaffiliation{umich}{University of Michigan, Ann Arbor}
  \icmlaffiliation{google}{Google Research}

  \icmlcorrespondingauthor{M. Emrullah Ildiz}{eildiz@umich.edu}

  \icmlkeywords{Machine Learning, ICML}

  \vskip 0.3in
]



\printAffiliationsAndNotice{}  

\begin{abstract}

State-of-the-art reasoning models utilize long chain-of-thought (CoT) to solve increasingly complex problems using more test-time computation. 
In this work, we explore a long CoT setting where the model makes up to K successive attempts at solving a problem, in which each attempt is allowed to build on earlier ones after the model receives a hard verifier feedback. This motivates RL methods that can harness per-attempt rewards by carefully weighting individual attempts. We study optimizing the Verification@K reward (the model succeeds by the K-th attempt) and show that naively weighing the attempts by their pass/fail results in biased gradients. We introduce \textbf{Calibrated Attempt-Level} (CAL) GRPO by devising a weighing strategy to obtain unbiased gradients while maintaining small variance. Our theory reveals how incorporating per-attempt rewards influence the training and the eventual Verification@K performance. Experiments, baselines, and ablations on synthetic and real data corroborate our theory and the benefits of CAL-GRPO over vanilla GRPO as well as naive weighting. 

\end{abstract}
\input{sections/introduction}
\input{sections/setup}

\input{sections/main_results}

\input{sections/markov_chain}

\input{sections/math_and_maze}

\input{sections/independent_attempt}
\input{sections/related_work}
\input{sections/conclusion}
\bibliography{example_paper}
\bibliographystyle{icml2026}

\newpage
\appendix
\onecolumn
\input{sections/appendix/experimental_details}
\input{sections/appendix/additional_details}

\input{sections/appendix/equivalence_theorem_proof}
\end{document}

%% file: notations.tex
\usepackage{times}
\usepackage{subcaption}
\usepackage{caption}

\usepackage[utf8]{inputenc} 
\usepackage[T1]{fontenc}    
\usepackage{hyperref}       
\usepackage{url}            
\usepackage{booktabs}       
\usepackage{amsfonts,amsmath,amssymb}       
\usepackage{nicefrac}       
\usepackage{microtype}      
\usepackage{xcolor}         
\usepackage{txfonts}
\usepackage{graphicx}
\usepackage{wrapfig,bm,comment, color}
\usepackage{breakurl,epsfig,epsf,fmtcount,semtrans,multirow,boldline}
\usepackage{tcolorbox}
\tcbuselibrary{skins}
\usepackage{tikz}

\definecolor{darkred}{RGB}{150,0,0}
\definecolor{darkredred}{RGB}{0,0,0}
\definecolor{darkgreen}{RGB}{0,150,0}
\definecolor{darkblue}{RGB}{0,0,200}
\hypersetup{colorlinks=true, linkcolor=darkred, citecolor=darkgreen, urlcolor=darkblue}

\newcommand{\beq}{\begin{equation}}
\newcommand{\ba}{\begin{align}}
\newcommand{\ea}{\end{align}}

\newcommand{\eeq}{\end{equation}}
  
\newcommand{\vct}[1]{\bm{#1}}

\newcommand{\mtx}[1]{\bm{#1}}

\newcommand{\E}{\operatorname{\mathbb{E}}}

\newcommand{\V}{{\mtx{V}}}

\newcommand{\Sc}{\mathcal{S}}

\newcommand{\Tc}{\mathcal{T}}

\newcommand{\Yc}{\mathcal{Y}}

\newcommand{\m}{\vct{m}}

\renewcommand{\P}{\operatorname{\mathbb{P}}}

\newcommand{\Var}{\text{Var}}

\newcommand{\rhoVK}{\rho}

\newcommand{\JVK}{J}

%% file: sections/introduction.tex
\section{Introduction}\label{sec:introduction}

The emergence of advanced reasoning models is fundamentally driven by the scaling of test-time compute, where extended CoT traces help solve more complex problems ~\citep{wei2023chainofthoughtpromptingelicitsreasoning,wang2023selfconsistencyimproveschainthought,yao2023treethoughtsdeliberateproblem,aghajohari2025markovianthinkerarchitectureagnosticlinear}. This paradigm effectively shifts the computational burden to the autoregressive generation process, enabling the model to search for new approaches or revise past proposals~\citep{yao2023reactsynergizingreasoningacting}. While traditional CoT is often viewed as a single coherent trace, recent research increasingly approaches reasoning as an iterative, multi-attempt process where models utilize external verifier feedback to build on and correct earlier failures ~\citep{lightman2023letsverifystepstep,cobbe2021trainingverifierssolvemath,dhuliawala2023chainofverificationreduceshallucinationlarge}. However, optimizing these multi-turn interactions through reinforcement learning (RL) poses a critical challenge for credit assignment ~\citep{ahmadian2024basicsrevisitingreinforcestyle}. Trajectory-level rewards fail to leverage dense feedback and can backpropagate success signals to incorrect/uninformative early attempts~\citep{uesato2022solvingmathwordproblems,lightman2023letsverifystepstep,cobbe2021trainingverifierssolvemath,stiennon2022learningsummarizehumanfeedback,ouyang2022traininglanguagemodelsfollow}. In contrast, naive heuristics can introduce systematic biases in policy gradient estimation, motivating a principled framework~\citep{REINFORCE,NIPS1999_464d828b,schulman2018highdimensionalcontinuouscontrolusing}.

In this work, we frame long CoT as a Verification@K (Ver@K) problem: The model makes at most K successive attempts until it solves the problem, and after each attempt, it gets to observe a hard pass/fail feedback. Under this setting, if a trajectory terminates at $\ell$ attempts, the model receives $\ell$ 0-1 rewards and this dense feedback can be used to enhance RL training. This motivates RL algorithms that weigh individual attempts for better policy gradient estimation. We ask:
\begin{description}[leftmargin=0pt, labelsep=0pt]
    \item \textbf{Q:} What RL algorithms efficiently maximize the Ver@K reward and teach the model better correction capabilities?
\end{description}

Specifically, we examine Attempt-Level variations of REINFORCE Leave-One-Out and GRPO where we weigh the gradient contributions at attempt-level rather than trajectory-level. 
An intuitive strategy is weighing the attempts by their individual pass/fail rewards, which we call Naive Attempt-Level (NAL) Algorithm. Interestingly, we establish that NAL over-emphasizes earlier attempts and results in biased policy gradients for the Ver@K objective. As our key technical contribution, we introduce C(alibrated)AL Algorithm by correcting the bias of NAL through a theoretically-grounded weight assignment. 

Remarkably, we demonstrate that CAL achieves the best-of-both worlds: \textbf{(i)} It returns unbiased gradient unlike NAL and \textbf{(ii)} it achieves lower variance than Trajectory-Level (TL) Algorithm, which employs trajectory-level sparse rewards. Importantly, all three variants are instantiations of a generic Attempt-Level method with specific weight assignments as depicted in Figure \ref{fig:verk_tl_nal_cal}. As a remark, the number of backpropagation steps required is the same across all methods. To summarize, we make the following contributions:
\begin{enumerate}[label=\roman*, leftmargin=*]
\item We introduce attempt-level variations of both RLOO and GRPO to facilitate finer-grained credit assignment.
\item We formally derive the policy gradient of the Ver@K objective and introduce CAL as an effective method to optimize it.
\item We establish that CAL corrects the potential bias of NAL and prove that it also achieves strictly lower variance compared to standard TL under mild conditions.
\item We demonstrate that CAL consistently outperforms TL and NAL baselines across a diverse set of reasoning tasks, including MATH, Maze navigation, and structured Markov Chain benchmarks.
\end{enumerate}




%% file: sections/setup.tex
\section{Problem Setup and Preliminaries}\label{sect:setup}
We consider a reinforcement learning with verifiable reward (RLVR) setup where a policy model is trained to solve problems using a \emph{verification-based multi-attempt procedure}. For each problem, the model can attempt a solution up to $K$ times, and an external verifier provides binary feedback after each attempt.
Each trajectory is assigned a single $0/1$ reward depending on whether
the model achieves a correct solution within the first $K$ attempts.

\begin{figure}[t]
  \centering
\includegraphics[width=\linewidth]{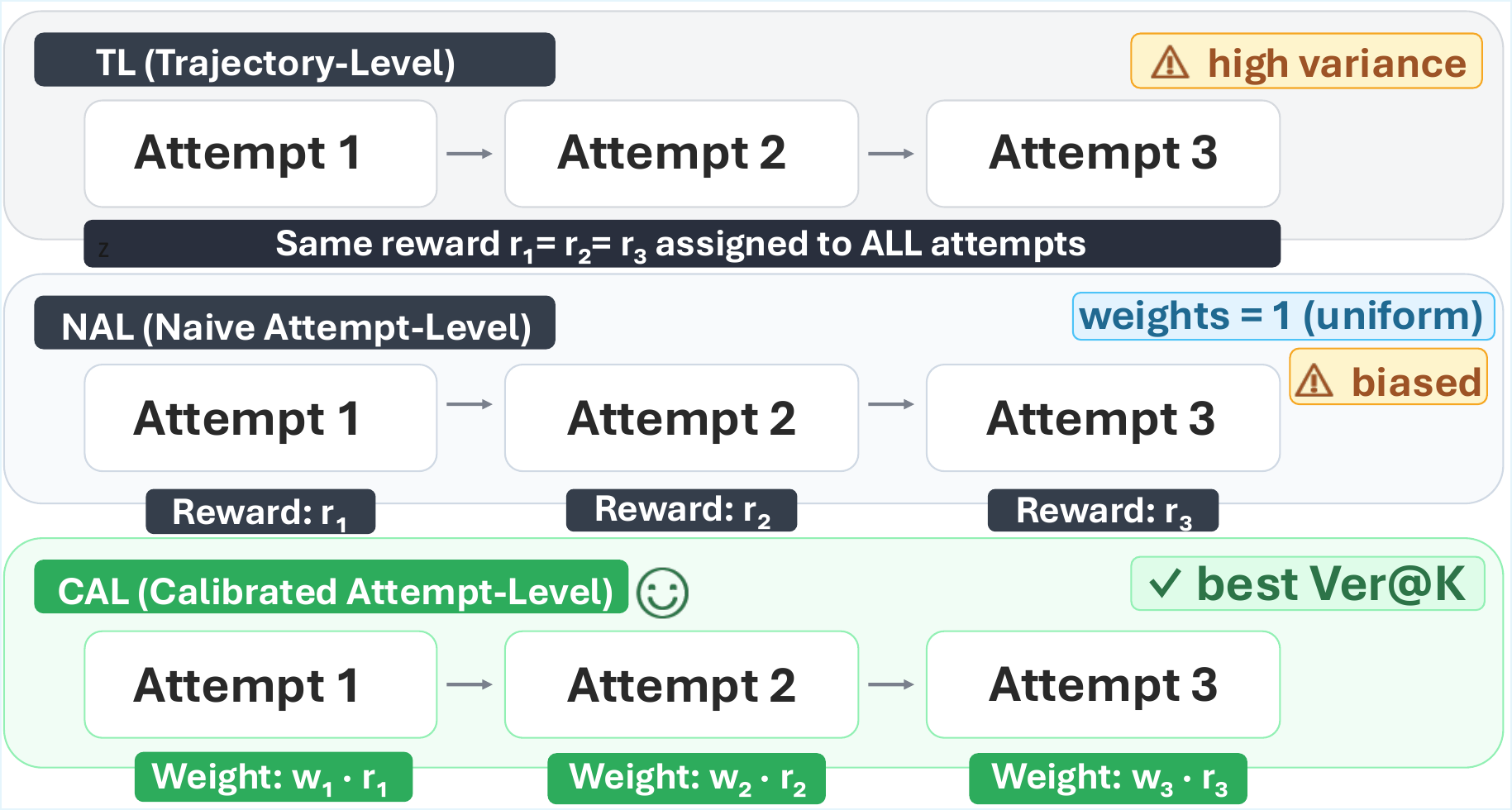}
  \caption{We consider RLOO and GRPO methods that weigh individual attempts. TL is the standard approach that uses the final Ver@K reward for all attempts. NAL uses per-attempt pass/fail as weights. Proposed CAL approach calibrates naive weights to obtain unbiased low-variance gradients.}
  \label{fig:verk_tl_nal_cal}
\end{figure}

Let $\mathcal{X}$ denote the space of problem inputs and
$\Yc$ the space of correct reference answers.
We assume examples $(x,y)$ are drawn from a distribution $P$ over
$\mathcal{X}\times\Yc$. A  policy $\pi_\theta$ (e.g., an autoregressive language model) with parameters $\theta$ maps states to distributions over actions (token sequences).

For a fixed problem $(x,y)$, we consider the following rollout procedure. At attempt $t\in\{1,\dots,K\}$, the environment is in state $s_t$, which encodes the original problem $x$ together with the history of previous attempts and rewards.
The policy samples a candidate solution $
y_t \sim \pi_\theta(\cdot \mid s_{t-1})$,
and an external verifier returns a binary signal
$r_t \in \{0,1\}$ indicating whether $y_t$ is correct with respect to $y$.
If $r_t = 1$, the rollout terminates immediately.
If $r_t = 0$ and $t < K$, the incorrect solution $y_t$ and feedback are appended
to the state, and the process continues with attempt $t+1$.
If all $K$ attempts are incorrect, the rollout terminates after $K$ attempts.

Formally speaking, let the first state $s_0$ be the problem $x$ and let $s_t = (s_{t-1},~y_{t},~r_{t})$ be the state denoting the history up to and including the $(t)$-th attempt, and let $\tau:=s_\Tc= (x,~y_1,~r_1~\cdots~y_{\Tc(\tau)}, r_{\Tc(\tau)})$ denote the resulting random trajectory, where the stopping time ${\Tc(\tau)}\in\{1,\dots,K\}$ is the attempt index at which the rollout stops: ${\Tc(\tau)}$ is the smallest index $t \leq K$ with $r_t=1$ if the model ever succeeds, and ${\Tc(\tau)}=K$ if the model never succeeds. We define the \emph{Ver@K reward} for trajectory $\tau$ as
$
r_{\Tc(\tau)}
$. For notational simplicity, we omit the trajectory $\tau$, so we use $\Tc$ instead of ${\Tc(\tau)}$ in this section.

For fixed $(x,y)$, let $p_\theta(\tau \mid x,y)$ denote the trajectory
distribution induced by the policy $\pi_\theta$ and the verifier. For notational simplicity we write $p_\theta(\tau \mid x)$ with the dependence on 
$y$ through the verifier implicit. The per-example Ver@K success probability is
\begin{align*}
\rhoVK_\theta(x,y)
&:=
\E_{\tau \sim p_\theta(\cdot \mid x)}
\big[ r_\Tc \big] .
\end{align*}
Our training objective is to maximize the expected
Ver@K success probability:
\begin{equation}
\JVK(\theta)
:= \E_{(x,y)\sim P}
\big[ \rhoVK_\theta(x,y) \big].
\label{eq:global-ver-k-objective}
\end{equation}
In this paper, we develop two different approaches to the Ver@K problem, namely trajectory-level optimization and attempt-level optimization. Next, we derive gradients for these approaches to provide insight into their behavior. In the remainder of the paper, we utilize these gradients to maximize the objective of \eqref{eq:global-ver-k-objective} with the REINFORCE algorithm \citep{REINFORCE} and its extension to the leave-one-out version \citep{ahmadian2024basicsrevisitingreinforcestyle}. 

\textbf{Trajectory-level Optimization: }
In this approach, we treat each trajectory $\tau$ as a single solution sampled from the trajectory distribution $p_\theta(\cdot \mid x)$. This formulation allows us to view different trajectories as independent samples from the same distribution. Applying the log-derivative property, at the trajectory level, the gradient of \eqref{eq:global-ver-k-objective} for a fixed problem $(x,y)$ can be written as
\begin{equation}\label{eq:ver-k-reinforce-general}
\begin{aligned}
\nabla_\theta \rhoVK_\theta(x,y)
&= \nabla_\theta
\E_{\tau \sim p_\theta(\cdot\mid x)}
\big[ r_\Tc \big] \\
&= \E_{\tau \sim p_\theta(\cdot\mid x)}
\big[
r_\Tc\,
\nabla_\theta \log p_\theta(\tau\mid x)
\big]. 
\end{aligned}
\end{equation}

The gradient is therefore an expectation over trajectories $\tau$ drawn from $p_\theta(\cdot \mid x)$ and depends on each trajectory only through its realized reward and log-probability.

\textbf{Attempt-level Optimization: }In this approach, we decompose a single trajectory into a sequence of dependent attempts and analyze each attempt individually to derive the gradient of \eqref{eq:global-ver-k-objective} for a fixed problem instance $(x,y)$. This formulation enables attempt-level credit assignment during solution generation, improving optimization efficiency by reducing the sparsity of trajectory-level rewards. On the other hand, because the attempts are sequentially dependent, this approach requires computing more complex conditional probability distributions. The trajectory distribution (over valid trajectories) factorizes as $
p_\theta(\tau \mid x)
=
\prod_{t=1}^{\Tc}
\pi_\theta(y_t \mid s_{t-1}),
$
and therefore
\begin{equation}
\nabla_\theta \log p_\theta(\tau \mid x)
=
\sum_{t=1}^{\Tc} \nabla_\theta \log \pi_\theta(y_t \mid s_{t-1}).
\label{eq:logtraj-grad-det}
\end{equation}
Substituting \eqref{eq:logtraj-grad-det} into \eqref{eq:ver-k-reinforce-general}
and extending the random
upper limit $\Tc$ to $K$ using the indicator $\mathbf{1}_{\Tc\ge i}$ yields
\begin{equation}
\nabla_\theta \rhoVK_\theta(x,y)
=
\sum_{i=1}^{K}
\E_{\tau \sim p_\theta(\cdot\mid x)}
\left[
\mathbf{1}_{\Tc\ge i}
r_\Tc\,
\nabla_\theta \log \pi_\theta(y_i \mid s_{i-1})
\right]. \nonumber
\end{equation}
Next, define the continuation success probability
\begin{equation}
V_{\theta,i}(s_i)
:=
\E\left[r_\Tc\mid s_i\right]
=
\P\left(\exists \,t\in\{i+1,\dots,K\}:r_t=1 \mid s_{i}\right)
\nonumber
\end{equation}
and $V_{\theta,K+1}(\cdot) := 0$; 
where $s_i$ is the state before attempt $i$. Since the verifier is
deterministic, conditioning on $(s_i,y_i)$ fixes $r_i$, and the remaining
Ver@K reward satisfies the one-step recursion
\begin{equation}
\E\left[r_\Tc\mid s_{i-1},y_i\right]
=
r_i + (1-r_i)\,V_{\theta,i}(s_{i}),
\nonumber
\end{equation}
where $s_{i}=(s_{i-1},y_i,r_i)$ is the next state. Applying the tower property to
$\nabla_\theta \rhoVK_\theta(x,y)$ gives
\begin{equation}\label{eqn:gradientAL}
\begin{aligned}
\nabla_\theta \rhoVK_\theta(x,y)
&=
\sum_{i=1}^{K}
\E_{\tau \sim p_\theta(\cdot\mid x)}
\big[
\mathbf{1}_{\Tc \ge i}\,
\big(r_i + (1-r_i)\,V_{\theta,i}(s_{i})\big) \\
&\qquad\qquad\qquad\qquad
\nabla_\theta \log \pi_\theta(y_i \mid s_{i-1})
\big].
\end{aligned}
\end{equation}
In attempt-level optimization, the gradient is expressed as an expectation over attempts and depends on the continuation success probabilities \(V_{\theta,i}(s_{i})\).

Trajectory-level optimization requires sampling full trajectories from the underlying distribution, whereas attempt-level optimization involves computing more complex conditional probability distributions. In this paper, our goal is to develop a practical algorithm that strikes an effective trade-off between these two approaches.

In the remainder of the paper, we describe the REINFORCE algorithm with leave-one-out for both trajectory-level and attempt-level optimization in Section~\ref{sect:main_results}. We then present a practical estimator for $V_{\theta, i}(s_i)$ and introduce an algorithm called \emph{Calibrated Attempt-Level Algorithm (CAL)}. In Section~\ref{sec:experiments}, we compare the proposed method with existing approaches on a synthetic Markov chain benchmark, the MATH dataset, and a maze navigation task. Finally, in Section~\ref{sect:independent_attemtp}, we prove the unbiasedness of the proposed gradient estimator and provide variance-reduction guarantees relative to trajectory-level optimization under mild assumptions.

%% file: sections/main_results.tex
\section{Derivation of RLOO Algorithms}\label{sect:main_results}
In this section, we derive the REINFORCE with leave-one-out (RLOO) algorithm for trajectory-level and attempt-level optimization using the gradients provided in Section \ref{sect:setup}, and we study the relationship between these two approaches. Finally, inspired by these two approaches, we define a practical reward shaping function over a sampled group to apply a GRPO-based algorithm that maximizes the Verification@K reward.

We start by defining the REINFORCE with Leave-one-out (RLOO) \cite{ahmadian2024basicsrevisitingreinforcestyle} algorithm for the trajectory and attempt-level optimization. While defining, we fix a problem instance $(x,y)$ and draw a \emph{group} of $N\geq2$ independent
trajectories
$\tau_1,\dots,\tau_N \overset{\text{i.i.d.}}{\sim} p_\theta(\cdot\mid x)$.

\textbf{RLOO for Trajectory-level Optimization: }In this approach, we assign a single reward to each trajectory in order to apply the RLOO algorithm for the trajectory-level optimization. If there exists one correct answer inside the trajectory, we assign 1 reward; otherwise we assign 0 reward. Formally speaking, let $\Tc(\tau_1),\dots,\Tc(\tau_N)$ be the corresponding stopping times for the trajectories. Then, the assigned reward for the $n$th trajectory is $r_n^\text{TL} := r_{\Tc(\tau_n)}$, where $r_{\Tc(\tau_n)}$ is the verification@K reward defined in Section \ref{sect:setup}. REINFORCE with Leave-One-Out (RLOO) replaces the trajectory reward $r_n^{\text{TL}}$ by the
leave-one-out advantage
\begin{equation}
A_n^{\text{TL}}
~=~
r_n^{\text{TL}} \;-\; \frac{1}{N-1}\sum_{m\neq n} r_m^{\text{TL}},
\nonumber
\end{equation}
and uses the trajectory-level gradient estimator
\begin{equation}\label{defn:TL}
\widehat{G}_{\mathrm{RLOO}}^{\text{TL}}(\theta; x,y)
~=~
\frac{1}{N}\sum_{n=1}^N
A_n^{\text{TL}} \, \nabla_\theta \log p_\theta(\tau_n\mid x).
\end{equation}

\textbf{RLOO for Attempt-level Optimization: }In this approach, we assign a reward to each attempt (instead of assigning a single reward to the whole trajectory!) using the probabilities $V_{\theta, i}(s_i)$ and $\P(\Tc \geq i)$. Let $r_{n,i}$ be the correctness of $n$th trajectory $i$th attempt for $n \in [N]$ and $i \in [\Tc(\tau_n)]$. Inspired by the gradient derivation for the attempt-level optimization in \eqref{eqn:gradientAL}, we reshape this reward to obtain a reward for the attempt-level optimization as follows:
\begin{equation}\label{eqn:reward_defn_ALO}
    r_{n,i}^{\text{AL}} =  \big(r_{n,i} + (1-r_{n,i}) V_{\theta, i}(s_{n,i})\big)
\end{equation}
Now, we define the advantage for each attempt $i$. The important difference from the previous approach is that we apply the leave-one-out centering for each attempt separately. Let $\Sc_i := \{n : n \in [N],~ \Tc(\tau_n) \geq i\}$. Then, when $|\Sc_i| > 1$
\begin{equation}\label{eqn:advantage_defn}
    A_{n,i}^{\mathrm{AL}}
~:=~
r_{n,i}^{\text{AL}}-\frac{1}{|\Sc_i| - 1} \sum_{m\neq n,  m \in \Sc_i} r_{m,i}^{\text{AL}},
\end{equation}
and when $|\Sc_i| = 1$, $A_{n,i}^{\mathrm{AL}} = 0$. Based on these advantages, we define the attempt-level gradient estimator as 
\begin{equation}\label{defn:AL}
    \widehat{G}_{\mathrm{RLOO}}^{\mathrm{AL}}(\theta;x,y)
:=
\sum_{i=1}^K \frac{1}{N}\sum_{n \in \Sc_i} A_{n,i}^{\mathrm{AL}} \nabla_\theta\log\pi_\theta(y_{n,i}\mid s_{n,i-1}).
\end{equation}

Until now, we have defined the RLOO algorithm for trajectory-level and attempt-level optimization. 
Now, we are ready to state our main result in this section:
\begin{theorem}\label{thm:main_theorem}
Recall the definitions of trajectory and attempt-level estimators $\widehat{G}_{\mathrm{RLOO}}^{\text{TL}}(\theta; x,y)$ and $\widehat{G}_{\mathrm{RLOO}}^{\text{AL}}(\theta; x,y)$ in \eqref{defn:TL} and \eqref{defn:AL}, respectively. Then, we have
\begin{equation}
    \E\big[\widehat{G}_{\mathrm{RLOO}}^{\text{TL}}(\theta; x,y)\big] =  \E\big[\widehat{G}_{\mathrm{RLOO}}^{\text{AL}}(\theta; x,y)\big] = \nabla_\theta \rhoVK_\theta(x,y). \nonumber
\end{equation}
Define $g_{n,i}\;:=\;\mathbf{1}_{\Tc_n\ge i}\,\nabla_\theta\log\pi_\theta(y_{n,i}\mid s_{n,i-1})$ and 
\begin{equation}\label{defn:Z_i}
\begin{aligned}
    Z_i^{\mathrm{TL}}:=\frac{1}{N}\sum_{n=1}^N A_n^{\mathrm{TL}}\,g_{n,i},
~
~
Z_i^{\mathrm{AL}}:=\frac{1}{N}\sum_{n=1}^N A_{n,i}^{\mathrm{AL}}\,g_{n,i}.
\end{aligned}
\end{equation}Then, we have  
\begin{equation}
    \Var\big(Z_i^{\mathrm{TL}}\big) \geq \Var\big(Z_i^{\mathrm{AL}}\big) \qquad \forall i \in [K] \nonumber
\end{equation}
where the equality occurs only when $K=1$ or $V_{\theta, i}(s_i) \in \{0, 1\}$ for all possible $s_i$. 

\end{theorem}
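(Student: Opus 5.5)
Both identities rest on the score-function fact $\E[\nabla_\theta\log\pi_\theta(y_{n,i}\mid s_{n,i-1})\mid s_{n,i-1}]=0$. We combine it with independence across the $N$ trajectories.

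\textbf{Trajectory-level estimator.} Write $A_n^{\mathrm{TL}}=r_n^{\mathrm{TL}}-b_n$, where $b_n$ is the leave-one-out mean. The baseline $b_n$ is a function of $\{\tau_m\}_{m\neq n}$ only, so it is independent of $\tau_n$. Hence
\[
\E[b_n\nabla_\theta\log p_\theta(\tau_n\mid x)]=\E[b_n]\,\E[\nabla_\theta\log p_\theta(\tau_n\mid x)]=0 .
\]
The remaining term $\E[r_n^{\mathrm{TL}}\nabla_\theta\log p_\theta(\tau_n\mid x)]$ equals $\nabla_\theta\rhoVK_\theta$ by \eqref{eq:ver-k-reinforce-general}. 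Averaging over $n$ gives the claim.

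\textbf{Attempt-level estimator.} The baseline set $\Sc_i$ is random, so we condition on $\mathcal{F}_{n,i}:=\sigma(\{\tau_m\}_{m\neq n}, s_{n,i-1})$. Three facts hold under this conditioning:
\begin{itemize}
\item The indicator $\mathbf{1}_{\Tc_n\ge i}$ is $s_{n,i-1}$-measurable, because $\Tc_n\ge i$ iff $r_{n,1}=\dots=r_{n,i-1}=0$.
\item Given $n\in\Sc_i$, the set $\Sc_i\setminus\{n\}$ and the rewards $r^{\mathrm{AL}}_{m,i}$ for $m\ne n$ are $\mathcal{F}_{n,i}$-measurable.
\item The conditional law of $y_{n,i}$ given $\mathcal{F}_{n,i}$ is $\pi_\theta(\cdot\mid s_{n,i-1})$.
\end{itemize}
Therefore the baseline term in $A^{\mathrm{AL}}_{n,i}$ has zero conditional mean against the score. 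This includes the degenerate case $|\Sc_i|=1$, where $A^{\mathrm{AL}}_{n,i}=0$.

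Here I must be careful. When $|\Sc_i|=1$, zeroing the advantage drops the term $r^{\mathrm{AL}}_{n,i}\,g_{n,i}$ as well as the baseline. Its conditional mean is $\mathbf{1}\{\Sc_i\setminus\{n\}=\emptyset\}\,\E[r^{\mathrm{AL}}_{n,i}g_{n,i}\mid\mathcal F_{n,i}]$, which is nonzero in general. So I would either interpret the estimator so that the degenerate case keeps $r^{\mathrm{AL}}$ uncentered, or verify exactly as the paper's conventions dictate. I flag this as the first potential obstacle.

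With the baseline removed, what remains is
\[
\frac1N\sum_n\sum_i\E\big[\mathbf{1}_{\Tc_n\ge i}\,\big(r_{n,i}+(1-r_{n,i})V_{\theta,i}\big)\,\nabla_\theta\log\pi_\theta(y_{n,i}\mid s_{n,i-1})\big],
\]
which equals \eqref{eqn:gradientAL}.

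\textbf{Variance comparison.} Fix $i$ and use the one-step recursion $\E[r_{\Tc_n}\mid s_{n,i-1},y_{n,i}]=r^{\mathrm{AL}}_{n,i}$ on $\{\Tc_n\ge i\}$. On that event, $r_n^{\mathrm{TL}}=r^{\mathrm{AL}}_{n,i}+\xi_{n,i}$, where the noise $\xi_{n,i}$ has zero mean given $\mathcal{G}_i:=\sigma(s_{m,i-1},y_{m,i}:m\in[N])$. The other factors in $Z_i^{\mathrm{AL}}$ are also $\mathcal{G}_i$-measurable: the scores $g_{m,i}$, the set $\Sc_i$, and the AL advantages.

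I would first write $Z_i^{\mathrm{TL}}$ through the centering over $\Sc_i$, then handle trajectories outside $\Sc_i$. For $m\notin\Sc_i$ we have $g_{m,i}=0$, but they still enter the TL baselines of the $n\in\Sc_i$. Those baseline contributions give extra terms of the form $c\cdot\sum_{n\in\Sc_i}g_{n,i}$, whose coefficients are $\mathcal{G}_i$-measurable up to independent noise.

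The key step is to show $\E[Z_i^{\mathrm{TL}}\mid\mathcal G_i]=Z_i^{\mathrm{AL}}+B_i$ with $B_i$ mean-zero. Then apply the law of total variance:
\[
\Var(Z_i^{\mathrm{TL}})=\E\Var(Z_i^{\mathrm{TL}}\mid\mathcal G_i)+\Var(\E[Z_i^{\mathrm{TL}}\mid\mathcal G_i]).
\]
The two baselines differ in denominators ($N-1$ versus $|\Sc_i|-1$) and in which trajectories they include. I expect this mismatch to be the main obstacle; it must be shown not to increase variance. My plan is to condition additionally on $\Sc_i$ and on the first-$(i-1)$ histories, and show that the cross term between $Z_i^{\mathrm{AL}}$ and the residual vanishes. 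That residual is the noise $\xi$ plus the baseline mismatch. It vanishes because each residual piece is either conditionally mean-zero given $\mathcal{G}_i$, or multiplies $\sum_{n\in\Sc_i}g_{n,i}$ with a coefficient independent of $y_{\cdot,i}$, whose score has zero mean.

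Given the vanishing cross term, $\Var(Z_i^{\mathrm{TL}})=\Var(Z_i^{\mathrm{AL}})+\E\|\text{residual}\|^2$. Equality then forces $\Var(\xi_{n,i}\mid\mathcal G_i)=0$, i.e.\ $V_{\theta,i}(s_i)\in\{0,1\}$ almost surely. The case $K=1$ is trivial because then $\Sc_1=[N]$ and $r^{\mathrm{AL}}=r^{\mathrm{TL}}$.
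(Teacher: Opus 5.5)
Your unbiasedness arguments are fine, and the degenerate case you flag is a real issue. With $A^{\mathrm{AL}}_{n,i}=0$ when $|\Sc_i|=1$, the attempt-$i$ term is scaled in expectation by $1-(1-\P(\Tc\ge i))^{N-1}$. The paper's appendix avoids this only by assuming $M_i=|\Sc_i|\ge 2$ almost surely.

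The genuine gap is in the variance step, and it cannot be closed as planned. A trajectory that stops before attempt $i$ has succeeded, so $R_m=1$ for $m\notin\Sc_i$. On $\{M_i\ge 2\}$ this gives exactly $\E[Z_i^{\mathrm{TL}}\mid\mathcal G_i]=Z_i^{\mathrm{AL}}+B_i$, where $B_i=-\frac{N-M_i}{N(N-1)}\sum_{n\in\Sc_i}(1-\bar Q_{-n})\,g_{n,i}$ and $\bar Q_{-n}$ is the leave-one-out mean of the $Q_{m,i}$ over $\Sc_i$. This $B_i$ is $\mathcal G_i$-measurable, so it is not conditionally mean-zero, and its coefficients depend on the other trajectories' attempt-$i$ outputs. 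Moreover, $\E\langle Z_i^{\mathrm{AL}},B_i\rangle$ contains diagonal terms proportional to $\E[(Q_{n,i}-\bar Q_{-n})(1-\bar Q_{-n})\|g_{n,i}\|^2]$, which do not vanish. A zero-mean score does not help when the same score appears twice.

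In fact the mismatch can lower the TL variance, so the inequality is false for the estimator in \eqref{defn:TL}. Take $K=2$, $i=2$, and let attempt $1$ fail with probability $q=1/2$. Let attempt $2$ be a logit-Bernoulli with success probability $\rho=0.1$ and score $r_{n,2}-\rho$. Then $V_{\theta,2}\equiv 0$, so your noise $\xi$ vanishes identically and only the baselines differ. Their limits are $b=\rho$ for AL and $b=\E[R]=0.55$ for TL. As $N\to\infty$, both variances satisfy $N\Var(Z_2)\to q\,[\rho(1-\rho)^2(1-b)^2+(1-\rho)\rho^2b^2]-q^2\rho^2(1-\rho)^2$. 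This limit is $\approx 0.031$ for AL and $\approx 0.0075$ for TL.

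The missing idea is to change the comparator, which is what the paper's appendix does. It uses the per-attempt TL advantage $R_n-\frac{1}{M_i-1}\sum_{m\in\Sc_i,\,m\neq n}R_m$, centered over the same reached set and with the same denominator as AL. The resulting TL estimator is still unbiased, modulo the same $|\Sc_i|=1$ caveat. Conditioning on $\mathcal G_i$ then replaces each $R_n$ by $Q_{n,i}$ and changes nothing else. Hence $\E[\widehat Z_i^{\mathrm{TL}}\mid\mathcal G_i]=\widehat Z_i^{\mathrm{AL}}$ holds exactly, for either the $1/N$ or $1/M_i$ normalization, since $M_i$ is $\mathcal G_i$-measurable. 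The law of total variance gives $\Var(\widehat Z_i^{\mathrm{TL}})=\Var(\widehat Z_i^{\mathrm{AL}})+\E[\Var(\widehat Z_i^{\mathrm{TL}}\mid\mathcal G_i)]$ with no cross term, and your $\xi$-decomposition finishes the argument. Replace the part of your plan that tries to show the baseline mismatch is harmless with this change of comparator, because the mismatch is genuinely not harmless.
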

This theorem yields two conclusions. First, the RLOO gradient estimators for trajectory-level and attempt-level optimization are unbiased. The key reason is that we define the rewards $r_i^{\text{TL}}$ and $r_{n,i}^{\text{AL}}$ to match the gradients derived in \eqref{eq:ver-k-reinforce-general} and \eqref{eqn:gradientAL}. Second, the attempt-level gradient estimator has lower variance than the trajectory-level gradient estimator, because it exploits the rewards from individual attempts rather than treating the entire trajectory as a single sample. Under reasonable assumptions on the attempt-level components, we show the variance reduction for each attempt. Under suitable cross-covariance conditions between the attempts' gradients, it implies that $\Var(\widehat{G}_{\mathrm{RLOO}}^{\text{TL}}) \geq \Var( \widehat{G}_{\mathrm{RLOO}}^{\text{AL}})$. The proof for the above theorem is provided in Appendix \ref{app:main_results}.

The main practical drawback of attempt-level optimization is that we assume access to the probabilities $V_{\theta,i}(s_{n,i})$ for every possible trajectory. In practice, it is difficult to compute these probabilities for all trajectories. To overcome this difficulty, we provide a rough estimate of these probabilities. We refer to this algorithm as the Calibrated Attempt-level algorithm.

\begin{figure*}[t!]
    \centering
    \begin{subfigure}[t]{0.32\textwidth}
        \centering
        \includegraphics[width=\linewidth]{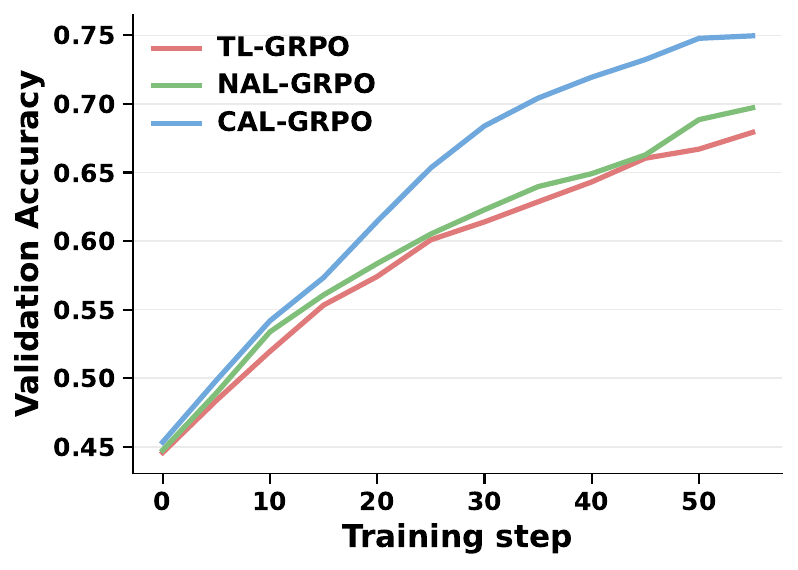}
        \caption{\textbf{Ver@2 validation accuracy on MATH} across training. CAL-GRPO attains the highest Ver@2, with NAL-GRPO improving over TL-GRPO.}
        \label{fig:math_ver2_acc}
    \end{subfigure}
    \hfill
    \begin{subfigure}[t]{0.32\textwidth}
        \centering
        \includegraphics[width=\linewidth]{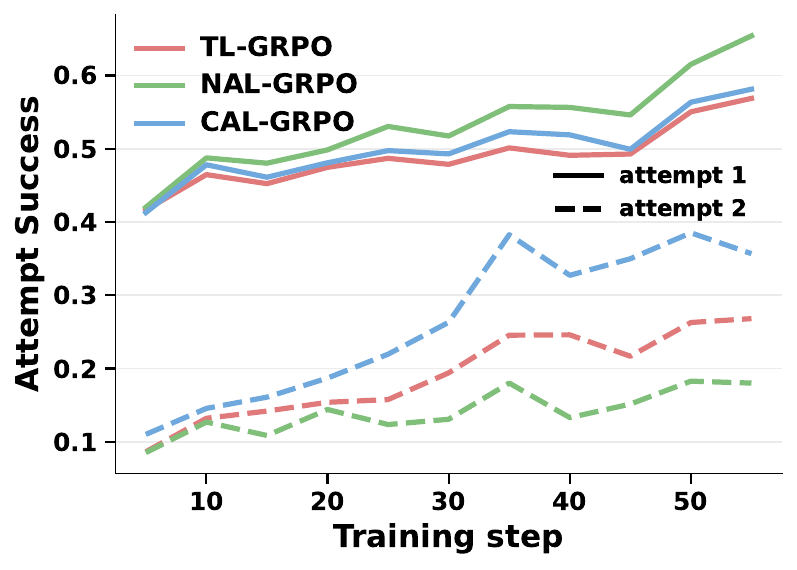}
        \caption{\textbf{Attempt-wise success.} Solid: attempt-1 success. Dashed: attempt-2 success conditioned on attempt 1 failing; CAL-GRPO notably improves corrections on attempt-2.}
        \label{fig:math_ver2_attempt_success}
    \end{subfigure}
    \hfill
    \begin{subfigure}[t]{0.32\textwidth}
        \centering
        \includegraphics[width=\linewidth]{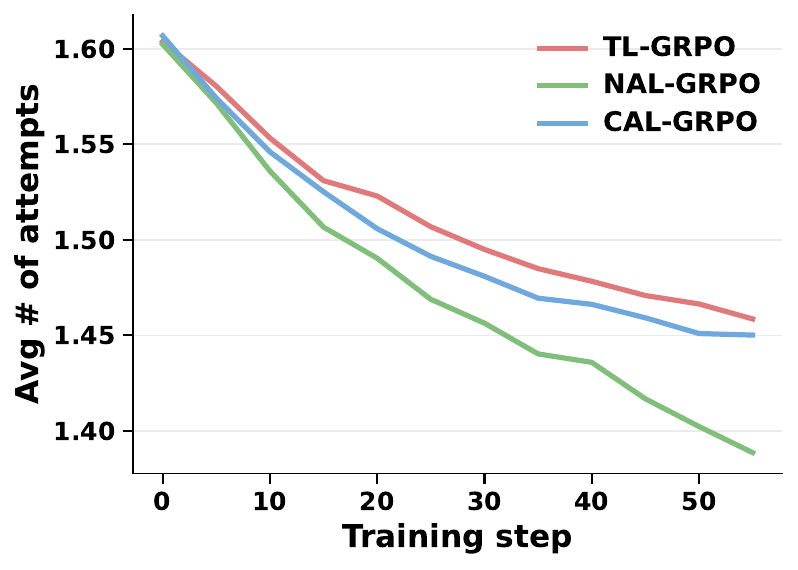}
        \caption{\textbf{Average attempts per problem} under Ver@2. NAL-GRPO reduces attempts the most, while CAL-GRPO trades slightly higher attempts for higher Ver@2.}
        \label{fig:math_ver2_avg_attempts}
    \end{subfigure}

    \caption{\textbf{Optimizing the Ver@2 objective on MATH with GRPO variants.} We compare Trajectory-Level GRPO (TL-GRPO), Naive Attempt-Level GRPO (NAL-GRPO), and Calibrated Attempt-Level GRPO (CAL-GRPO). \textbf{(a)} CAL-GRPO achieves the best Ver@2 over training. \textbf{(b)} The improvement is primarily driven by stronger second-attempt correction, whereas NAL-GRPO emphasizes first-attempt success. \textbf{(c)} NAL-GRPO minimizes the number of attempts, while CAL-GRPO uses slightly more attempts but delivers higher overall Ver@2.}
    \label{fig:math_ver2_estimators}
\end{figure*}

\textbf{Calibrated Attempt-level Algorithm (CAL): }In this algorithm, we fix a problem $(x,y)$ and draw a \emph{group} of $N$ independent trajectories $\tau_1,\dots,\tau_N \overset{\text{i.i.d.}}{\sim} p_\theta(\cdot\mid x)$. Recall that $r_{n,i}$ represents the correctness of the output for the $n$th trajectory at $i$th attempt and recall the definition of $\Sc_i = \{n : n \in [N],~ \Tc(\tau_n) \geq i\}$. We define an estimate of the success probability for the attempt $i \in [\Tc(\tau_n)]$ with leave-one-out as
\begin{equation}
    \hat{\rho}_{n,i} := \frac{\sum_{m \in \Sc_i, m\neq n} \mathbf{1}_{r_{m,i} = 1}}{|\Sc_i|-1}. \nonumber
\end{equation} 
Then, inspired by the reward definition $r_{\theta, i}^\text{AL}$ in \eqref{eqn:reward_defn_ALO}, we assign the following weight to the attempt $i \in [K]$:
\begin{equation}\label{eqn:defn_weight_WAL}
    w_{n,i}^{\text{CAL}} =  \Pi_{j = i+1}^{K} (1-\hat{\rho}_{n,j}),
\end{equation}
 For the reward $r_{n,i}^{\text{CAL}} := r_{n, i}$ and define $A_{n,i}^{\text{CAL}}$ accordingly similar to \eqref{eqn:advantage_defn}. Then, the gradient estimator for the weighted attempt-level algorithm is the following:
\begin{equation}\label{eqn:GRLOO_WAL}
\begin{aligned}
&\widehat{G}_{\mathrm{RLOO}}^{\mathrm{CAL}}(\theta;x,y)
 \\
 & \qquad :=\sum_{i=1}^K \frac{1}{N}\sum_{n \in \Sc_i} w_{n,i}^{\text{CAL}} A_{n,i}^{\mathrm{CAL}} \nabla_\theta\log\pi_\theta(y_{n,i}\mid s_{n,i-1}).
\end{aligned}
\end{equation}
Note that we do not utilize the probability of $\V_{\theta, i}(s_i)$ in the definition of \eqref{eqn:GRLOO_WAL} compared to \eqref{eqn:reward_defn_ALO} as we estimate the future success probabilities based on the samples inside the sampled. 
In addition to this algorithm, we also defined two different baselines.

\textbf{Naive Attempt-level Algorithm (NAL): }In this algorithm, we assign a reward to each attempt in a trajectory, but we do not apply the weight that we defined in \eqref{eqn:defn_weight_WAL}. Instead, we weight with $1$ and obtain the gradient estimator for the attempt-level algorithm similar to \eqref{eqn:GRLOO_WAL}.

\textbf{Trajectory-level Algorithm (TL): }In this algorithm, we directly use the gradient estimator defined in \eqref{defn:TL}.

An important remark is that the number of backpropagation steps required is the same across all methods. Compared to the TL algorithm, the NAL and CAL algorithms assign rewards to individual attempts rather than a single reward to the entire trajectory, which reduces the variance of the gradient estimator (Theorem \ref{thm:main_theorem}). The key distinction between the two is that the CAL algorithm assigns attempt-specific weights based on future success probabilities, inspired by \eqref{eqn:reward_defn_ALO}, whereas the NAL algorithm uses uniform weights. A central contribution of this paper is this calibrated attempt-wise reward assignment, which avoids treating each trajectory as a single output generation. In our experiments, the NAL algorithm serves as a natural ablation, representing a naive uniform-weight variant of the proposed CAL algorithm.

%% file: sections/markov_chain.tex
\section{Experiments on Ver@K}\label{sec:experiments}

In this section, we evaluate TL-GRPO, NAL-GRPO, and CAL-GRPO on three benchmarks: a synthetic Markov chain task, MATH, and Maze. We first describe the experimental setup and then present the main results and analysis on Ver@K.

\subsection{Experimental Setup}
We compare three algorithms, namely trajectory-level, naive attempt-level, and calibrated attempt-level on three datasets: a synthetic Markov chain task, the MATH dataset \citep{hendrycks2021measuringmathematicalproblemsolving}, and the Maze navigation task \citep{ivanitskiy2023configurablelibrarygeneratingmanipulating}. All algorithms are implemented using Group Relative Policy Optimization (GRPO)~\cite{shao2024deepseekmathpushinglimitsmathematical} with the advantage shaping methods described in Section~\ref{sect:main_results}. Different from the previous section’s leave-one-out formulation, in our experiments we use the standard GRPO group normalization: advantages are computed by centering rewards with the within-group empirical mean and scaling by the within-group empirical standard deviation; for NAL-GRPO and CAL-GRPO this normalization is applied separately at each attempt index over trajectories that reach that attempt. We use this normalization throughout all our experiments as we empirically observe that dividing by the within-group empirical standard deviation improves optimization, and Appendix~\ref{app:add_results} reports an ablation without the standard deviation scaling. Following the notation in Section~\ref{sect:main_results}, in this section we refer to these algorithms as TL-GRPO, NAL-GRPO, and CAL-GRPO, respectively. Further benchmark, verifier, rollout, and GRPO-variant details are
provided in \crefrange{app:datasets}{app:verk_estimators}.

For the MATH and Maze tasks, we implement Ver@K training by extending the open-source verl framework~\cite{Sheng_2025} and leveraging its multi-attempt rollout and interaction framework for iterative verifier feedback. Specifically, we use a multi-attempt interaction loop (problem $\rightarrow$ attempt $\rightarrow$ verifier feedback) with $K \in {2,4}$ (Ver@2 and Ver@4), while keeping rollout and group-sampling configurations fixed across all estimators. For the Maze Ver@2 results on $5{\times}5$ and $9{\times}9$, we report mean curves over five independent training runs with different random seeds, together with confidence intervals across runs. For the remaining settings, we report single-run curves.


\begin{figure*}[t!]
    \centering
    \begin{subfigure}[t]{0.4\textwidth}
        \centering
        \includegraphics[width=\linewidth]{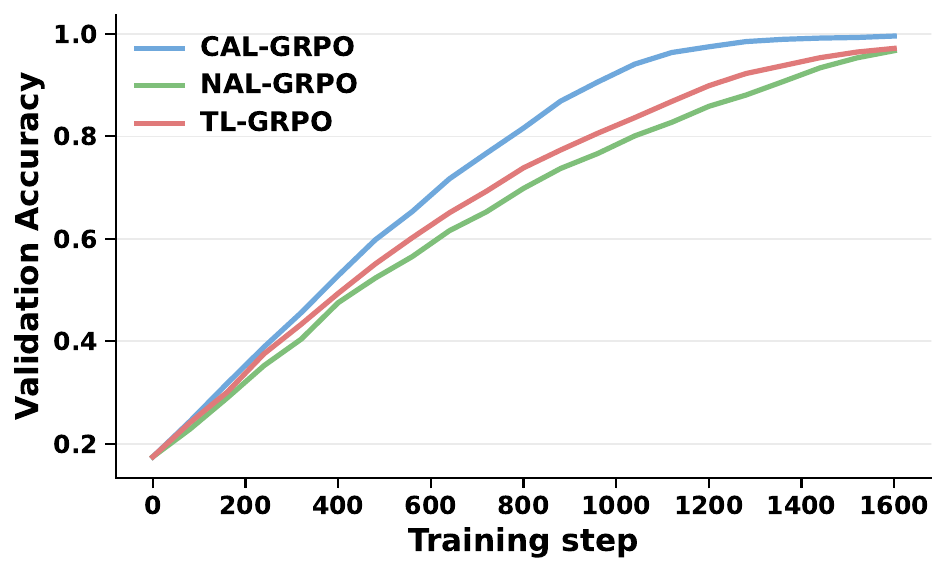}
        \caption{\textbf{No-trap Markov chain Ver@4} ($n_{\text{hubs}}{=}5$, $m{=}6$).
        Validation success vs.\ training step; the calibrated attempt-level method learns fastest and reaches the highest success.}
        \label{fig:mc_notrap}
    \end{subfigure}\hspace{0.06\textwidth}%
    \begin{subfigure}[t]{0.4\textwidth}
        \centering
        \includegraphics[width=\linewidth]{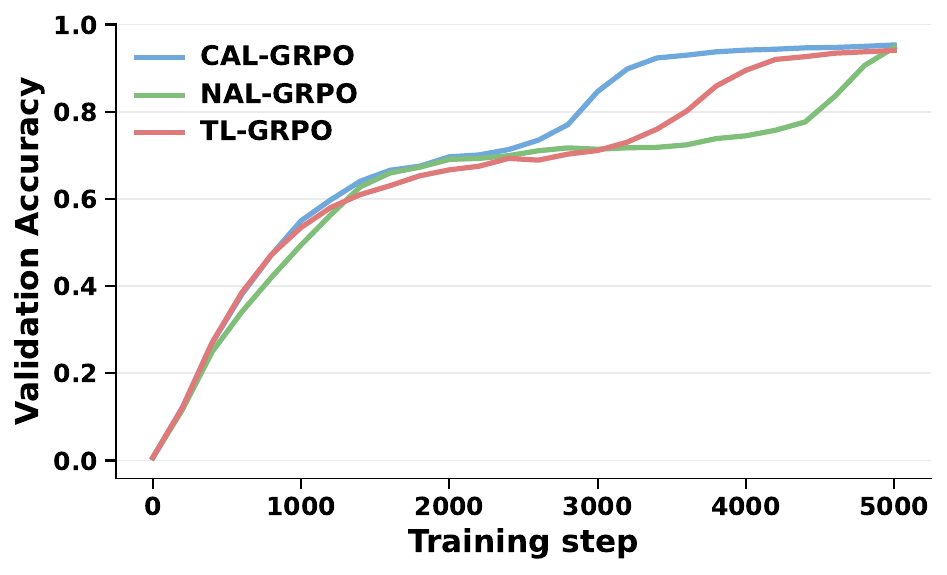}
        \caption{\textbf{Trap Markov chain Ver@2} ($n_{\text{hubs}}{=}5$, $m{=}6$).
        Validation success vs.\ training step with a sampled pre-terminal trap to avoid; the calibrated attempt-level method reaches higher accuracy earlier.}
        \label{fig:mc_trap}
    \end{subfigure}

    \caption{\textbf{Optimizing the Ver@4 and Ver@2 objectives on Markov Chain task with GRPO variants.} We compare TL-GRPO, NAL-GRPO, and CAL-GRPO. Success is defined by reaching the absorbing terminal state with a valid shortest-length trajectory; the trap setting in (b) additionally fails any attempt that visits the trap state. CAL-GRPO consistently reaches higher accuracies in both settings and breaks the plateau earlier than TL-GRPO and NAL-GRPO in the trap setting.}
    \label{fig:mc_results}
\end{figure*}

%% file: sections/math_and_maze.tex
\textbf{MATH Dataset.}
We train and evaluate on the MATH dataset~\cite{hendrycks2021measuringmathematicalproblemsolving}, which contains $12{,}500$ competition-style problems ($7{,}500$ train / $5{,}000$ test). We use the official train split for RL training and report Ver@K on the held-out test split. The verifier extracts the model's final answer and compares it to the ground-truth answer (reward $r_t{=}1$ if correct, else $0$). When an attempt is incorrect, we append a short, fixed-format feedback message indicating failure and prompt the model to revise its answer and try again; otherwise the interaction terminates immediately on success.

\textbf{Maze Task.}
We consider a text-based maze navigation task which is a standard benchmark of planning and sequential decision-making for LLMs~\cite{dao2025alphamazeenhancinglargelanguage,chen2025passktrainingadaptivelybalancing, ivanitskiy2023configurablelibrarygeneratingmanipulating}. We generate synthetic datasets using the configurable \texttt{maze-dataset} library~\cite{ivanitskiy2023configurablelibrarygeneratingmanipulating}. For maze sizes $9{\times}9$, $7{\times}7$, and $5{\times}5$, we create $10{,}000$ training and $1{,}000$ test mazes. Each instance provides a maze grid with a designated start and goal. The model outputs a sequence of discrete moves of $\{\texttt{U,D,L,R}\}$; and we deterministically check whether the sequence constitutes a valid path from start to goal without crossing walls, and return $r_t\in\{0,1\}$. We train with the same Ver@K procedure and the same three GRPO estimators as in the previous sections, and report Ver@K on the corresponding test sets for each maze size. We provide further experimental details in Appendix~\ref{app:exp_details}.

\textbf{Markov Chain Task.}
We introduce a synthetic structured Markov chain task for controlled evaluation of multi-attempt reward assignment under a deterministic verifier. The state space is organized into $n_{\text{hubs}}$ ordered hubs, each containing $m$ states, so that $n_{\text{states}} = n_{\text{hubs}}\cdot m$. Transitions are predominantly local within a hub (moves to nearby states, e.g., within $\pm 1$ positions), with additional sparse \emph{boundary} transitions that connect consecutive hubs. The final state in the last hub is an absorbing terminal state. Each instance samples a start state $s_0$ \emph{from the set of states that can reach the terminal}. In each attempt, the model autoregressively proposes a candidate state sequence. The verifier returns $r_t=1$ iff the proposed trajectory (i) reaches the absorbing terminal state, (ii) uses only valid transitions in the Markov chain transition graph, (iii) the state id never decreases, and (iv) has length equal to the precomputed shortest-path distance from $s_0$ to the terminal. We also consider a \emph{trap} variant, where a trap state is sampled among two pre-terminal states in the last hub with $1/2$ probability. Visiting the trap causes immediate failure, and shortest-path distances are computed on the graph with the trap removed. Under Ver@K, failed attempts reset to the same start state, and the model retries for up to $K$ attempts with early stopping. The trapped version needs Ver@K ($K \geq 2)$ approach to solve the problem as the trap is randomly selected for each trajectory. 

\textbf{Models.} For MATH and Maze, we train the instruction-tuned Qwen3-4B-Instruct model~\cite{yang2025qwen3technicalreport}. For the Markov chain benchmark, we initialize the policy from an SFT checkpoint trained on synthetic optimal trajectories generated from the known transition graph. Concretely, we sample start states (and also the trap state in the trap setting), compute the corresponding shortest valid trajectory to the absorbing terminal state on the graph with the trap removed when applicable, and use the resulting state-id sequence as the supervised target. We then fine-tune this checkpoint with the same Ver@K GRPO procedure used in the main experiments. \crefrange{app:datasets}{app:training_setup} provide the benchmark, verifier, tokenization, rollout, and optimization details.

\begin{figure*}[t!]
    \centering
    \begin{subfigure}[t]{0.32\textwidth}
        \centering
        \includegraphics[width=\linewidth]{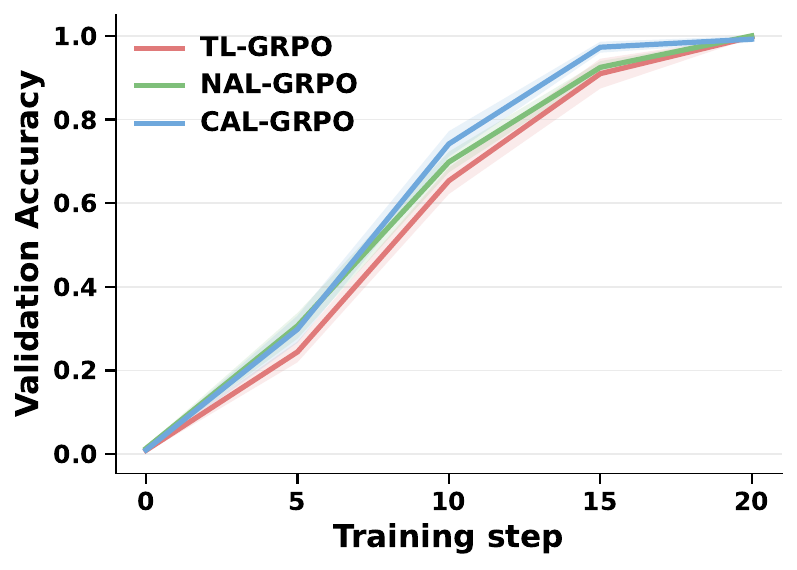}
        \caption{\textbf{Ver@2 validation accuracy on Maze} across training; CAL-GRPO reaches higher accuracy earlier, and all methods converge near-perfect.}
        \label{fig:maze_ver2_acc}
    \end{subfigure}
    \hfill
    \begin{subfigure}[t]{0.32\textwidth}
        \centering
        \includegraphics[width=\linewidth]{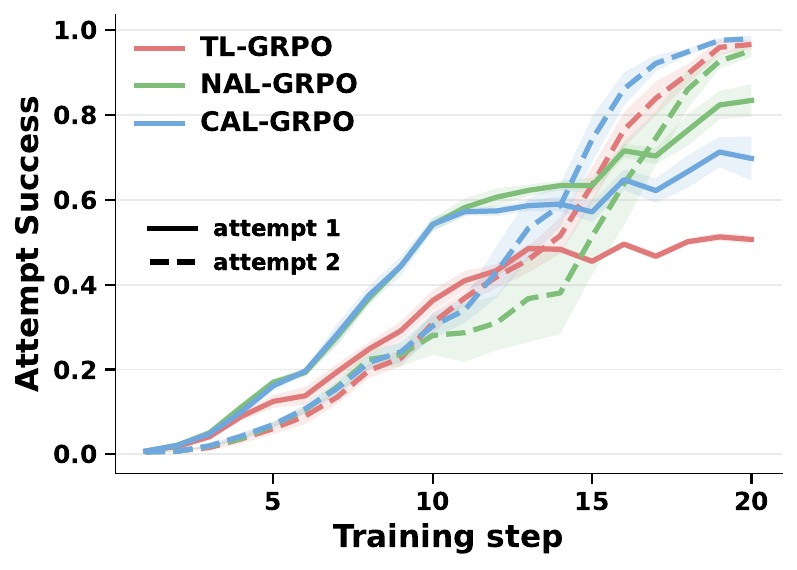}
        \caption{\textbf{Attempt-wise success.} Solid: attempt-1 success. Dashed: attempt-2 success given attempt-1 failure; CAL-GRPO yields the strongest correction on the second attempt.}
        \label{fig:maze_ver2_attempt_success}
    \end{subfigure}
    \hfill
    \begin{subfigure}[t]{0.32\textwidth}
        \centering
        \includegraphics[width=\linewidth]{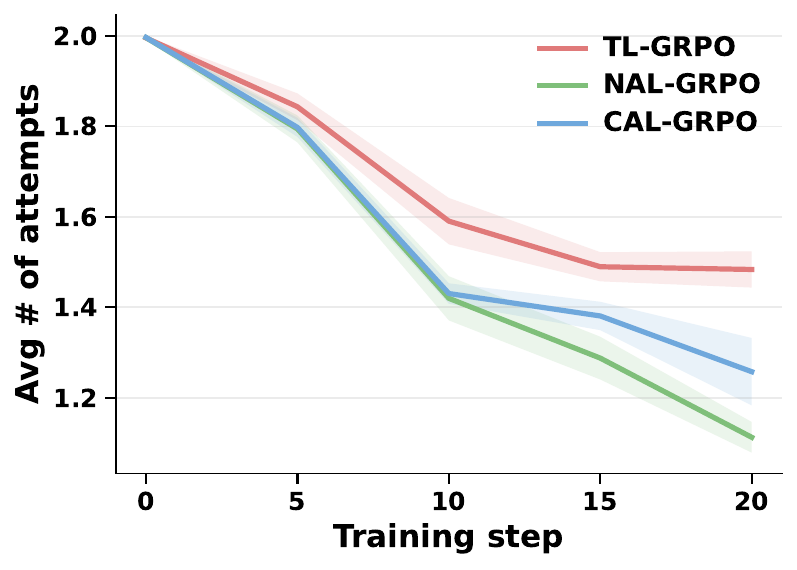}
        \caption{\textbf{Average attempts per instance} under early stopping. NAL-GRPO minimizes the number of attempts, mimicking optimization of a discounted objective.}
        \label{fig:maze_ver2_avg_attempts}
    \end{subfigure}

    \caption{\textbf{Optimizing the Ver@2 objective on 5x5 Maze with GRPO variants.} We compare Trajectory-Level GRPO (TL-GRPO), Naive Attempt-Level GRPO (NAL-GRPO), and Calibrated Attempt-Level GRPO (CAL-GRPO). \textbf{(a)} CAL-GRPO reaches higher Ver@2 accuracy earlier in training. \textbf{(b)} NAL-GRPO increases attempt-1 success, while CAL-GRPO strengthens attempt-2 correction. \textbf{(c)} NAL-GRPO uses the fewest attempts, effectively behaving like optimization of a discounted objective.}
    \label{fig:maze_ver2_estimators}
\end{figure*}

\subsection{Results and Analysis}

We study three questions: (Q1) Does \emph{attempt-level credit assignment} (NAL-GRPO/CAL-GRPO) improve over \emph{trajectory-level credit assignment} (TL-GRPO)? (Q2) Is the calibration in Eq.~(10) necessary beyond naive attempt-level rewards? (Q3) How does calibration change the accuracy--compute tradeoff across attempts? On MATH and Maze, attempt-level credit assignment substantially improves over TL-GRPO. On the Markov chain task, however, naive attempt-level normalization is not uniformly beneficial and can underperform TL-GRPO. Across all three benchmarks, CAL-GRPO is the most robust method because Eq.~(10) reweights each attempt by its estimated marginal contribution to final Ver@K.

\subsubsection{MATH and Maze tasks}
\noindent\textbf{Attempt-level credit assignment improves learning over trajectory-level credit assignment.} Figures~\ref{fig:math_ver2_estimators} and~\ref{fig:maze_ver2_estimators} show that, on MATH and Maze, moving from trajectory-level credit assignment (TL-GRPO) to attempt-level credit assignment (NAL-GRPO/CAL-GRPO) substantially improves learning under Ver@2 and Ver@4 objectives (please see Appendix~\ref{app:add_results} for Ver@4 results). TL-GRPO assigns the same binary trajectory success signal to all tokens, so a trajectory solved only on attempt 2 still provides positive updates to the attempt-1 tokens that produced an incorrect first answer. Under multi-attempt procedure, this mixes gradients for first-attempt correctness and correction-with-feedback, yielding a noisy (and sometimes conflicting) learning signal for improving attempt-1 behavior. In contrast, NAL-GRPO normalizes rewards separately per attempt among trajectories that actually reach that attempt, which yields cleaner reward assignment: attempt 1 is trained on first-attempt correctness, and attempt 2 is trained on correction conditioned on failure. This sharply accelerates attempt-1 success and correspondingly reduces the expected number of attempts executed under early stopping (Figures~\ref{fig:math_ver2_avg_attempts} and~\ref{fig:maze_ver2_avg_attempts}). 

\noindent\textbf{Naive attempt-level normalization induces an early-attempt bias.} However, NAL-GRPO implicitly downweights later attempts as training progresses: because attempt $t{+}1$ is visited only on the shrinking subset of failure cases, the total gradient contribution of later attempts is effectively multiplied by a reach probability, which prioritizes earlier-attempt correctness to minimize attempts, closely aligning with Corollary \ref{prop:discounted-obj}. This indicates NAL-GRPO implicitly optimizes a discounted objective where later steps receive smaller weight once earlier steps become more accurate.

\noindent\textbf{Calibration restores learning signal for later-attempts and improves final Ver@K.} Our \emph{Calibrated Attempt-Level GRPO (CAL-GRPO)} corrects this misalignment by reweighting attempts according to Eq. \eqref{eqn:defn_weight_WAL}, which estimates the probability that all later attempts fail and therefore measures the marginal contribution of attempt $t$ to final Ver@K success. Intuitively, this focuses attempt-1 updates on the hard residual cases where a second attempt is unlikely to correct the mistake anyway, while avoiding over-investing in attempt-1 improvements when attempt 2 already succeeds accurately, where further gains in attempt-1 accuracy have diminishing returns for Ver@2 accuracy. As a result, CAL-GRPO preserves more update budget for learning correction-with-feedback when second-attempt correction still contributes substantially to Ver@2. Empirically, CAL-GRPO yields the best Ver@2 on both tasks (Figures~\ref{fig:math_ver2_acc} and~\ref{fig:maze_ver2_acc}) by substantially increasing conditional second-attempt success (Figures~\ref{fig:math_ver2_attempt_success} and~\ref{fig:maze_ver2_attempt_success}), at the cost of a modest increase in average attempts relative to NAL-GRPO (Figures~\ref{fig:math_ver2_avg_attempts} and~\ref{fig:maze_ver2_avg_attempts}). This indicates that if the primary goal is Ver@K accuracy, the attempt-weighted CAL-GRPO method works well; and if reducing number of attempts matters, NAL-GRPO can be preferable. We provide further analysis and additional Maze results for $9{\times}9$ and $7{\times}7$ under Ver@2 and Ver@4 in Appendix~\ref{app:add_results}.

\subsubsection{Markov chain task}
\noindent\textbf{A different regime under stricter verifier.} Figure~\ref{fig:mc_results} provides a controlled comparison of TL-GRPO, NAL-GRPO, and CAL-GRPO and highlights a qualitatively different regime from MATH/Maze. We observe that NAL-GRPO is the least sample-efficient and underperforms even TL-GRPO in both the no-trap and trap settings. With a strict, sequence-level verifier (valid shortest path or nothing), per-attempt normalization can over-penalize early failed attempts and slow down learning progress. Under NAL-GRPO, attempt-$1$ tokens receive zero/negative advantage whenever attempt 1 fails, \emph{even if the trajectory is “useful” in the sense that it enables a later attempt to succeed after feedback}. On the other hand, TL-GRPO still propagates eventual Ver@K success back to earlier tokens, which can implicitly reward early-attempt behaviors that set up effective correction, and can therefore remain competitive. 

\noindent\textbf{Calibration preserves useful correction signal.} CAL-GRPO consistently outperforms the other methods as it keeps attempt-aware reward assignment while explicitly reweighting updates toward the later attempts that still contribute the most to marginal Ver@K success by effectively preserving learning budget for the correction behavior that drives the remaining improvements.

\noindent\textbf{CAL-GRPO breaks the trap-induced plateau earlier.} In the \emph{trap} variant of the Markov chain task, training exhibits a plateau around $\approx 0.75 = 1 - (1/2)^2$ as trap state is random with 1/2 probability before further gains, which reflects the additional late-stage failure mode introduced near the terminal hub. The policy can learn the general “forward shortest-path” behavior relatively quickly, but accurately avoiding the trap while still satisfying the shortest-path constraint is harder and becomes the dominant error source. CAL-GRPO breaks through this plateau earlier than the other baselines, which matches its intended behavior. Once attempt-1 behavior is already reasonably good, further improvements in Ver@K come from stabilizing correction behavior on retries for the remaining hard cases (here, trap-related failures), rather than continuing to over-optimize the first attempt.

%% file: sections/independent_attempt.tex
\section{Optimization of Verification@K at Attempt Level}\label{sect:independent_attemtp}

In Section~\ref{sect:main_results}, we have compared the trajectory-level optimization and attempt-level optimization, and we have provided a practical estimate of the attempt-level optimization with the weighted attempt-level algorithm. In this section, we prove the unbiasedness of the weighted attempt-level algorithm and establish variance reduction guarantees under the independent attempt index assumption we state below. Furthermore, we provide more insights about the attempt-level algorithm that we described in Section \ref{sect:main_results}. 

The independent attempt-index assumption is the following:
\begin{assumption}\label{assump:index-only-assumption}
    For a fixed problem $(x,y)$ and any attempt index $i\in\{1,\dots,K\}$, we assume that the conditional
distribution of the $i$-th response depends only on $x$ and the number
of previous responses. Formally, for any two histories $s_{1:i-1}$ and $s'_{1:i-1}$ of the
same length,
\begin{align*}
p_\theta(y_i = y \mid s_{1:i-1})
=
p_\theta(y_i = y \mid s'_{1:i-1})
\quad
\text{for all } y.
\end{align*}
\end{assumption}

Assumption \ref{assump:index-only-assumption} implies that there exists a family of policies
$(\pi_\theta^{(i)}(\cdot\mid x))_{i=1}^K$ such that
\begin{equation}
p_\theta(y_i = y \mid s_{1:i-1}) = \pi_\theta^{(i)}(y\mid x),
\quad i=1,\dots,K. \nonumber
\end{equation}
Now, we are ready to state the theorem:
\begin{theorem}\label{thm:independent-attempt_main}
    Suppose that Assumption \ref{assump:index-only-assumption} holds. Recall the definitions of $\widehat{G}_{\mathrm{RLOO}}^{\text{TL}}(\theta; x,y)$ and $\widehat{G}_{\mathrm{RLOO}}^{\mathrm{CAL}}(\theta;x,y)$ in \eqref{defn:TL} and \eqref{eqn:GRLOO_WAL}, respectively. Then, we have the following:
    \begin{align}
        \E\big[\widehat{G}_{\mathrm{RLOO}}^{\mathrm{CAL}}(\theta;x,y)\big] = \nabla_\theta \rhoVK_\theta(x,y). \nonumber
    \end{align}
\end{theorem}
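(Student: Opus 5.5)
Under Assumption~\ref{assump:index-only-assumption}, the plan is to reduce the claim to the unbiasedness of $\widehat{G}_{\mathrm{RLOO}}^{\mathrm{AL}}$ from Theorem~\ref{thm:main_theorem}, and then show that the random weights $w_{n,i}^{\mathrm{CAL}}$ can be replaced by their expectations inside the expectation.

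\textbf{Step 1: the continuation values become history-independent.} Write $q_j := \P_{y\sim \pi^{(j)}_\theta(\cdot\mid x)}(r(y)=1)$ for the marginal success probability of attempt $j$. Under the assumption, attempts reached after a failure are independent Bernoulli$(q_j)$ draws. Hence for every failed state $s_i$,
\[
V_{\theta,i}(s_i) \;=\; 1-P_i, \qquad P_i:=\prod_{j=i+1}^K (1-q_j).
\]
Substituting into \eqref{eqn:reward_defn_ALO} gives $r^{\mathrm{AL}}_{n,i} = (1-P_i) + P_i\, r_{n,i}$.

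\textbf{Step 2: the AL advantage is a deterministic rescaling.} The constant $1-P_i$ cancels in the leave-one-out centering \eqref{eqn:advantage_defn}, so $A^{\mathrm{AL}}_{n,i} = P_i\, A^{\mathrm{CAL}}_{n,i}$. By Theorem~\ref{thm:main_theorem},
\[
\nabla_\theta\rhoVK_\theta(x,y) = \sum_{i=1}^K \frac1N \E\Big[\sum_{n\in\Sc_i} P_i\, A^{\mathrm{CAL}}_{n,i}\, \nabla_\theta\log\pi_\theta(y_{n,i}\mid s_{n,i-1})\Big].
\]
It therefore suffices to show, for each $i$ and $n$,
\[
\E\big[\mathbf 1_{n\in\Sc_i}\, w^{\mathrm{CAL}}_{n,i} A^{\mathrm{CAL}}_{n,i} g_{n,i}\big] = P_i\, \E\big[\mathbf 1_{n\in\Sc_i} A^{\mathrm{CAL}}_{n,i} g_{n,i}\big].
\]

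\textbf{Step 3: peel off the weight by backward conditioning.} Let $\mathcal F_j$ be the $\sigma$-field generated by all attempts $1,\dots,j$ of all $N$ trajectories. Then $\Sc_{j+1}$ is $\mathcal F_j$-measurable, and the factor $\mathbf 1_{n\in\Sc_i} A^{\mathrm{CAL}}_{n,i} g_{n,i}$ is $\mathcal F_i$-measurable. The weight $w^{\mathrm{CAL}}_{n,i}=\prod_{j>i}(1-\hat\rho_{n,j})$ has the property that each factor $\hat\rho_{n,j}$ is $\mathcal F_j$-measurable. Moreover, conditionally on $\mathcal F_{j-1}$, the outcomes $\{r_{m,j}\}_{m\in\Sc_j}$ are i.i.d.\ Bernoulli$(q_j)$; this uses the assumption together with independence of the $N$ trajectories. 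Consequently $\E[\hat\rho_{n,j}\mid \mathcal F_{j-1}] = q_j$.

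Conditioning on $\mathcal F_{K-1}$ pulls out a factor $(1-q_K)$; conditioning next on $\mathcal F_{K-2}$ pulls out $(1-q_{K-1})$, and so on down to $\mathcal F_i$. This yields $\E[w^{\mathrm{CAL}}_{n,i}\mid\mathcal F_i]=P_i$, and the tower property finishes the argument. Crucially, $\hat\rho_{n,j}$ excludes trajectory $n$ itself. This is not actually needed for the conditioning argument, but it keeps the weight decoupled from the $n$-th gradient term's own future, in case $n\in\Sc_j$.

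\textbf{Main obstacle.} The delicate point is the degenerate event $|\Sc_j\setminus\{n\}|=0$, i.e.\ at most one trajectory (possibly $n$ itself) reaches attempt $j$. There the leave-one-out mean is undefined, and the identity $\E[\hat\rho_{n,j}\mid\mathcal F_{j-1}]=q_j$ can fail. I would first check how the convention interacts with the product. If $\Sc_j\setminus\{n\}$ is empty, then so are $\Sc_{j'}\setminus\{n\}$ for all $j'>j$, so only the first empty index matters. I would then try to show that this event either forces $A^{\mathrm{CAL}}_{n,i}g_{n,i}$ to vanish (when $|\Sc_i|=1$ the advantage is $0$ by definition), or is handled by a stated convention under which the conditional mean is still $q_j$. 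Failing that, I would state the result on the event that the later sets are large enough and note that the bias is supported on that vanishing-probability event.
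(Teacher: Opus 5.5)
Your Steps 1--3 are the paper's proof essentially line for line. The appendix writes $Q_i=(1-P_i)+P_i r_i$ (there $P_i$ is called $u_i$) and cancels the constant to get $A^{\mathrm{AL}}_{n,i}=P_iA^{\mathrm{CAL}}_{n,i}$. It then proves $\E[w^{\mathrm{CAL}}_{n,i}\mid\mathcal G_i]=P_i$ by the same backward induction over the group filtration, and finally appeals to unbiasedness of AL.

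You read $\hat\rho_{n,j}$ as the average over $\Sc_j\setminus\{n\}$ when $n\notin\Sc_j$, and that is exactly the convention the appendix adopts. The main-text formula only covers $n\in\Sc_j$; applied verbatim it would divide by $|\Sc_j|-1$. This matters because any $n$ that succeeds at attempt $i$ is absent from every later $\Sc_j$.

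The degenerate event you isolate is not resolved by any argument in the paper. The appendix simply assumes $M_i=|\Sc_i|\ge 2$ almost surely, which makes every $\Sc_j\setminus\{n\}$ nonempty. Under that assumption your proof is complete.

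You are right that the event cannot be argued away, but two of your remedies need adjusting.

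\emph{The event does not force the term to vanish.} Suppose $|\Sc_i|\ge2$, every $m\in\Sc_i\setminus\{n\}$ succeeds at attempt $i$, and $n$ fails. Then $\Sc_{i+1}\setminus\{n\}=\emptyset$, yet $A^{\mathrm{CAL}}_{n,i}=-1$, so $A^{\mathrm{CAL}}_{n,i}g_{n,i}$ need not be zero.

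\emph{The weight itself can be repaired by a convention.} When the leave-one-out set is empty, average over all of $\Sc_j$, including $n$. This keeps conditional mean $q_j$, since, as you note, excluding $n$ is not needed. At the first $j$ with $\Sc_j=\emptyset$, either $j=i+1$ and $A^{\mathrm{CAL}}_{n,i}=0$, or $1-\hat\rho_{n,j-1}=0$ already kills the product.

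\emph{The advantage rule cannot be repaired this way.} CAL shares with AL the rule $A_{n,i}=0$ when $|\Sc_i|=1$, so it is hidden inside your appeal to Theorem~\ref{thm:main_theorem}. This rule drops $\E[\mathbf 1\{\Sc_i=\{n\}\}\,r^{\mathrm{AL}}_{n,i}\,g_{n,i}]$. The result is a bias of $-\frac1N\P(M_i=1)\,P_i\nabla_\theta q_i$ at each $i\ge2$; for $K=N=2$ it is $-q_1(1-q_1)\nabla_\theta q_2$.

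So the statement genuinely needs $\P(M_i=1)=0$ for $i\ge2$. The paper's assumption supplies this, though for $K\ge2$ that assumption fails whenever $0<q_1<1$. Otherwise one must carry an explicit bias term, which is exponentially small in $N$ but not zero. ``Restricting to the event'' is not an alternative: conditioning on $\{M_j\ge2\}$ changes the law of the earlier attempts and invalidates the score-function identities the argument relies on.
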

This theorem is a conjugate of Theorem \ref{thm:main_theorem} for the calibrated attempt-level algorithm under Assumption \ref{assump:index-only-assumption}. It shows that the calibrated attempt-level algorithm is unbiased. The proof is provided in Appendix \ref{app:independent_attempt}. Next, we analyze the naive attempt index model. Before, define its single-attempt success probability for each attempt $i$,
\begin{equation}
\rho_{\theta,i}(x,y)
:=
\P\left(r_i=1 \mid x,y, \text{using policy } \pi_\theta^{(i)}\right).
\nonumber
\end{equation}
As before, $r_i\in\{0,1\}$ indicates whether the $i$-th response is
correct with respect to the answer $y$. 
\begin{corollary}\label{prop:discounted-obj}
    The naive attempt-level algorithm optimizes $\sum_{i \in [K]} \gamma_i \P(r_i = 1)$ in expectation where $\frac{\gamma_i}{\gamma_j} = \Pi_{t = j+1}^i (1 - \rho_{\theta, i}(x,y))$ for $i > j$. 
\end{corollary}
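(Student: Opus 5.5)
Under Assumption~\ref{assump:index-only-assumption}, I will compute the expectation of the NAL estimator attempt by attempt, and then recognise the result as the gradient of a weighted sum of per-attempt success probabilities. The NAL estimator is \eqref{eqn:GRLOO_WAL} with $w_{n,i}\equiv 1$ and reward $r_{n,i}$. Its $i$-th block is $\frac{1}{N}\sum_{n\in\Sc_i} A_{n,i}\nabla_\theta\log\pi^{(i)}_\theta(y_{n,i}\mid x)$.

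\textbf{Conditioning.} Condition on the set $\Sc_i$, i.e.\ on which trajectories failed attempts $1,\dots,i-1$. Under the assumption, the attempts $y_{n,i}$ for $n\in\Sc_i$ are i.i.d.\ draws from $\pi^{(i)}_\theta(\cdot\mid x)$, independent of $\Sc_i$ and of the earlier histories. So, conditionally on $|\Sc_i|=M\ge 2$, the $i$-th block is an ordinary RLOO estimator on $M$ i.i.d.\ samples, scaled by $M/N$. The standard RLOO unbiasedness (the leave-one-out baseline is independent of sample $n$ and multiplies a zero-mean score) gives the conditional expectation $\frac{M}{N}\nabla_\theta\rho_{\theta,i}(x,y)$, where the gradient acts only through $\pi^{(i)}_\theta$. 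When $M\le 1$ the block is zero.

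\textbf{Taking expectations.} The $i$-th block then has expectation $\frac{1}{N}\E[|\Sc_i|\mathbf{1}_{|\Sc_i|\ge2}]\,\nabla\rho_{\theta,i}$. Each trajectory reaches attempt $i$ independently with probability $q_i:=\prod_{t<i}(1-\rho_{\theta,t})$, so $|\Sc_i|\sim\mathrm{Bin}(N,q_i)$. This gives $\E[|\Sc_i|\mathbf{1}_{|\Sc_i|\ge2}]=Nq_i\bigl(1-(1-q_i)^{N-1}\bigr)$. Up to the small-group correction $(1-q_i)^{N-1}$, which vanishes for large $N$ (or if the leave-one-out convention is replaced by plain REINFORCE), the expected update is therefore
\[
\sum_{i}\gamma_i\nabla_\theta\P(r_i=1),\qquad \gamma_i=q_i,
\]
where $\P(r_i=1)$ denotes the single-attempt success probability $\rho_{\theta,i}$. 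The ratio of consecutive weights is $\gamma_i/\gamma_j=\prod_{t=j}^{i-1}(1-\rho_{\theta,t})$, the reach-probability discount stated in the corollary; I read the corollary's index range as this shifted product.

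\textbf{Main obstacle.} The interpretation ``optimizes $\sum_i\gamma_i\P(r_i=1)$'' is only correct if the $\gamma_i$ are treated as fixed at the current $\theta$ (stop-gradient). This is because $\gamma_i$ itself depends on $\theta$, so the update equals the gradient of the surrogate $\sum_i\bar\gamma_i\rho_{\theta,i}$ with $\bar\gamma_i$ frozen, and I will state it in that sense. I will also contrast it with the true objective. Under the assumption, $\rho_\theta=1-\prod_i(1-\rho_{\theta,i})$, whose partial derivative in $\rho_{\theta,i}$ is $\prod_{t\ne i}(1-\rho_{\theta,t})$. This carries an extra factor $\prod_{t>i}(1-\rho_{\theta,t})$, exactly the CAL weight \eqref{eqn:defn_weight_WAL}, consistent with Theorem~\ref{thm:independent-attempt_main}. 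Stating this contrast explains the bias of NAL.
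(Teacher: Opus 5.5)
Your argument is correct, but it reaches the result by a different route than the paper. The paper treats the corollary as a one-line consequence of Theorem~\ref{thm:independent-attempt_main}. In that proof the calibration weight satisfies $\E[w^{\mathrm{CAL}}_{n,i}\mid\mathcal{G}_i]=u_i:=\prod_{j=i+1}^{K}(1-\rho_{\theta,j})$, so CAL's attempt-$i$ block has the same mean as the attempt-$i$ block of the exact gradient. NAL is the same estimator with $w\equiv 1$, so its mean is the exact gradient with attempt $i$ inflated by $1/u_i$.

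You instead compute NAL's mean from scratch. You condition on $\Sc_i$, use that the attempt-$i$ outputs are i.i.d.\ from $\pi^{(i)}_\theta$ under Assumption~\ref{assump:index-only-assumption}, apply plain RLOO unbiasedness, and average over $|\Sc_i|\sim\mathrm{Bin}(N,q_i)$. The paper's route is shorter and frames NAL directly as ``CAL without calibration.'' Yours is self-contained and needs neither the CAL theorem nor its standing assumption $M_i\ge 2$ a.s. It also exposes the finite-group factor $1-(1-q_i)^{N-1}$, which depends on $i$ and so slightly perturbs the weight ratios. Finally, it makes explicit the frozen-$\gamma$ sense of ``optimizes,'' which the paper leaves implicit.

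One correction: the index shift is not a matter of reading. Your ratio $\prod_{t=j}^{i-1}(1-\rho_{\theta,t})$ and the stated $\prod_{t=j+1}^{i}(1-\rho_{\theta,t})$ differ by the factor $(1-\rho_{\theta,j})/(1-\rho_{\theta,i})$; the subscript $i$ inside the paper's product is a typo for $t$. For $K=2$ you get $\gamma_2/\gamma_1=1-\rho_{\theta,1}$, while the statement gives $1-\rho_{\theta,2}$.

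The two formulas correspond to different normalizations of $\gamma_i$. If $\P(r_i=1)$ is taken to be $\rho_{\theta,i}$, as the notation introduced just before the corollary suggests, your $\gamma_i=q_i$ is the right answer and the stated range is off by one at both ends. The paper's range arises instead when $\gamma_i$ is measured relative to the exact Ver@K weight on attempt $i$. Your last paragraph already has the ingredients: dividing NAL's coefficient $q_i$ by $\partial\rho_\theta/\partial\rho_{\theta,i}=\prod_{t\neq i}(1-\rho_{\theta,t})=q_iu_i$ gives $1/u_i$, and $u_j/u_i=\prod_{t=j+1}^{i}(1-\rho_{\theta,t})$. State both normalizations explicitly rather than identifying your formula with the one in the corollary.
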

This corollary is a direct result of the unbiasedness of $\widehat{G}_{\mathrm{RLOO}}^{\mathrm{CAL}}(\theta;x,y)$ by Theorem \ref{thm:independent-attempt_main} and the definition of $w_{n,i}^{\text{CAL}}$ in \eqref{eqn:defn_weight_WAL}. It shows that the naive attempt-level algorithm optimizes a discounted loss as a function of $K$ instead of the Verification@K loss defined in \eqref{eq:global-ver-k-objective}, which is consistent with Figures \ref{fig:math_ver2_avg_attempts} and \ref{fig:maze_ver2_avg_attempts}. It also shows that the naive attempt-level algorithm is biased whereas the proposed calibrated attempt-level algorithm is shown to be unbiased in Theorem \ref{thm:independent-attempt_main}.

%% file: sections/related_work.tex
\section{Related Work}\label{sec:related_work}
\textbf{Reinforcement learning with verifiable feedback for reasoning LLMs.}
Policy-gradient fine-tuning of reasoning LLMs with automatically verifiable rewards has been actively explored across a range of settings~\citep{Guo_2025,cui2025processreinforcementimplicitrewards,heckel2026asymmetric,wang2025reinforcementlearningreasoninglarge,zhangbread,su2025crossingrewardbridgeexpanding}.
Several analyses further study what such training changes in the model’s solution distribution and how it interacts with pass@$k$-style evaluation~\citep{yue2025doesreinforcementlearningreally,wen2025reinforcementlearningverifiablerewards}.
Complementary work improves optimization stability via refined reward assignment or denser intermediate learning signals~\citep{kazemnejad2025vinepporefiningcreditassignment}.
In contrast, our work explicitly targets sequential retries with dependent attempts and directly optimizes a Verification@K objective with attempt-level weighting to better match the long-CoT setting.

\textbf{Multi-attempt and self-feedback for revision.}
Multi-attempt prompting and retry-based mechanisms can elicit iterative reasoning and answer revision in LLMs~\citep{liu2025simpletryagainelicit,zelikman2022starbootstrappingreasoningreasoning}.
Self-feedback and tool-assisted critique loops further enable iterative refinement and reflection without requiring per-instance human corrections~\citep{madaan2023selfrefineiterativerefinementselffeedback,gou2024criticlargelanguagemodels,shinn2023reflexionlanguageagentsverbal}.
At the same time, several works characterize when self-correction succeeds or fails, emphasizing the importance of reliable external feedback~\citep{huang2024largelanguagemodelsselfcorrect}.

\textbf{Multi-turn RL and fine-grained reward assignment.}
A line of work studies RL for long-horizon, multi-turn LLM agents, where sparse terminal rewards make reward assignment difficult. In search and tool-use settings, recent methods add turn-level verifiable or judge-based rewards, intrinsic information-gain rewards, or finer step-level advantages built from grouped trajectories, shared states, or trajectory graphs~\citep{wei2025reinforcingmultiturnreasoningllm,wang2026informationgainbasedpolicyoptimization,feng2025groupingrouppolicyoptimizationllm,li2025saltstepleveladvantageassignment}. In code and tool-integrated reasoning, related approaches leverage execution feedback, learned verifier scores, single-step rewards, discounted turn returns, reward shaping, or feedback-conditioned rollout trees to support iterative self-correction~\citep{jain2025multiturncodegenerationsinglestep,ding2025empoweringmultiturntoolintegratedreasoning,ekbote2026murphymultiturngrpoself}.
Unlike these works, which mostly improve generic long-horizon agent training with task-specific turn- or step-level rewards, we study repeated attempts on the same input under hard verifier feedback and derive attempt-level weighting specifically for optimizing Verification@K.

\textbf{Verification@K, best-of-$N$, and verifier-guided test-time scaling.}
Verifier-guided selection and reranking, such as Best-of-$N$ and Pass@$K$, along with training objectives tailored to optimize these criteria, form a common approach for improving reasoning reliability~\citep{cobbe2021trainingverifierssolvemath,zhang2025generativeverifiersrewardmodeling,thrampoulidis2025advantageshapingsurrogatereward,mahdavi2025beyond}. Process-level supervision and step-wise verifiers enable richer guidance signals for search and reasoning-time allocation~\citep{lightman2023letsverifystepstep,khalifa2025processrewardmodelsthink,park2024ensemblinglargelanguagemodels}.
More broadly, work on test-time compute studies how to allocate sampling, search, and deliberation budgets and how to adapt models at test time for improved accuracy--compute tradeoffs~\citep{snell2024scalingllmtesttimecompute,wang2023selfconsistencyimproveschainthought,yao2023treethoughtsdeliberateproblem, gozeten2025testtime}.

%% file: sections/conclusion.tex
\section{Conclusion}\label{sec:conclusion}

In this paper, we introduced a formal view of long chain-of-thought as a \emph{multi-attempt} process with \emph{dependent} samples under verifier feedback, and studied direct optimization of the resulting Verification@K objective. From the corresponding policy-gradient structure, we derived attempt-aware credit assignment and proposed \emph{Calibrated Attempt-Level algorithm} (CAL), which reweights per-attempt learning signals to better match each attempt’s marginal contribution to Ver@K while keeping updates well-scaled.

Across a controlled Markov chain benchmark (including a trap variant) and real reasoning tasks (MATH and text-based Maze navigation), our empirical results show a consistent pattern: trajectory-level training (TL) entangles first-attempt correctness with later correction behavior, while naive attempt-level training (NAL) yields cleaner optimization and faster improvements in early-attempt success and efficiency in number of attempts. CAL further improves final Ver@K by sustaining useful learning pressure on later attempts when they remain the primary driver of additional gains, revealing an accuracy--compute trade-off that can be tuned depending on whether peak Verification@K or fewer retries is the priority.

A natural next step is to extend calibrated attempt-level weighting beyond hard binary verifiers to noisy/learned verifiers feedback, and to characterize when unbiasedness and variance-reduction guarantees still hold. It would also be valuable to scale to larger $K$ and study adaptive compute allocation (adaptive $K$ values) rather than using a fixed number of attempts.

%% file: sections/appendix/experimental_details.tex
\section{Experimental Details}\label{app:exp_details}

\subsection{Implementation of Ver@K GRPO Training}\label{app:verk_protocol}
We train policies with Ver@K using a multi-turn interaction loop implemented on top of the veRL/HybridFlow framework~\cite{Sheng_2025,verl_repo} and its multi-turn interaction system with an SGLang rollout backend~\cite{sglang_repo}. We modify the veRL source code to support (i) multi-attempt rollout with the same maximum output length and (ii) attempt-wise GRPO reward normalization and reweighting for the estimators described in \Cref{sect:main_results,sec:experiments}.

Each rollout allows up to $K$ assistant attempts. After attempt $k$, a deterministic verifier returns a binary reward signal $r_k\in\{0,1\}$. If $r_k=1$, the interaction terminates. Otherwise, if attempts remain, we inject a fixed feedback message indicating the attempt was incorrect and request another attempt. Attempts not reached due to early stopping are excluded from attempt-wise normalization at that step.

\subsection{Datasets and Prompting}\label{app:datasets}

\textbf{MATH.} We train and evaluate on the MATH dataset~\cite{hendrycks2021measuringmathematicalproblemsolving}. Each example is converted into a single-attempt chat prompt consisting of the problem statement followed by the instruction:
\emph{``Let’s think step by step and put the final answer in $\backslash$boxed\{\}.''}

\textbf{Maze.} We generate maze navigation datasets using the \texttt{maze-dataset} library~\cite{ivanitskiy2023configurablelibrarygeneratingmanipulating}. For each maze size ($5{\times}5$, $7{\times}7$, $9{\times}9$), we generate 10k training and 1k test instances. Mazes are sampled with the library's randomized depth-first-search lattice generator and converted to a text representation for prompting~\cite{ivanitskiy2023configurablelibrarygeneratingmanipulating}. We follow prior work studying maze navigation and multi-sample objectives for reasoning models~\cite{dao2025alphamazeenhancinglargelanguage,chen2025passktrainingadaptivelybalancing}. Each maze prompt provides an ASCII grid with walls/free cells and explicit start/goal markers. The model is instructed to output a move sequence inside \texttt{<answer>...</answer>} using the alphabet \{\texttt{U,D,L,R}\}.

\textbf{Markov chain.}
We construct a synthetic planning benchmark based on a structured Markov chain, designed to isolate multi-attempt credit assignment under a deterministic verifier with controllable difficulty. The state space is organized into $n_{\text{hubs}}$ ordered hubs with $m$ states per hub ($n_{\text{states}} = n_{\text{hubs}}\cdot m$). The transition graph is fixed across instances: transitions are predominantly local within a hub (e.g., to nearby states within $\pm 1$ positions), with sparse \emph{boundary} transitions connecting consecutive hubs; the final state in the last hub is an absorbing terminal state. Each instance samples a start state $s_0$ from the set of states that can reach the terminal. In each attempt, the model generates a token sequence representing a candidate state trajectory, and the verifier simulates it on the transition graph and returns $r_t{=}1$ iff the trajectory reaches the terminal, uses only valid transitions, never decreases the state id, and has length equal to the precomputed shortest-path distance from $s_0$ to the terminal (i.e., it is an optimal-length path). We train with Ver@K by separating attempts with a special \texttt{RETRY} token; after a failed attempt, the interaction resets to the same start state and the model retries up to $K$ attempts with early stopping. We also consider a \emph{trap} variant where a trap state is sampled per rollout from two pre-terminal states in the last hub; visiting the trap causes immediate failure, and shortest-path distances are computed on the transition graph with the trap removed.

\subsection{Verifiers and Rewards}\label{app:verifiers}
\textbf{MATH verifier.} We extract the final boxed expression from the model output, apply normalization, and check equivalence against the ground-truth answer. The per-attempt reward is $r_k{=}1$ if correct and $0$ otherwise.

\textbf{Maze verifier.} We extract the move string from \texttt{<answer>} tags, normalize to \{\texttt{U,D,L,R}\}, and simulate the trajectory on the maze. The per-attempt reward is $r_k{=}1$ iff the move sequence reaches the goal without crossing walls or leaving the grid; otherwise $0$.

\textbf{Markov chain verifier.}
Given a proposed state sequence for an attempt, we simulate it on the chain and return $r_k\in\{0,1\}$.
An attempt receives $r_k=1$ iff it reaches the absorbing terminal state, all transitions are valid under the chain's transition structure,
the state id never decreases, and the attempt length equals the precomputed shortest-path distance from the start state.
In the trap variant, visiting the sampled trap state causes immediate failure and shortest-path distances are computed with the trap removed.

\subsection{Rollout, Sampling, and Optimization}\label{app:training_setup}

\textbf{Model and algorithm.} For MATH and Maze, we fine-tune Qwen3-4B-Instruct-2507~\cite{yang2025qwen3technicalreport} using GRPO~\cite{shao2024deepseekmathpushinglimitsmathematical},
implemented in veRL~\cite{Sheng_2025,verl_repo} with our multi-turn and estimator modifications (Appendix~\ref{app:verk_estimators}).
For the Markov chain task, we initialize the policy from an SFT checkpoint and then apply the same Ver@K GRPO procedure.

\textbf{Markov-chain SFT initialization.}
The SFT checkpoint used to initialize the Markov-chain runs is trained on synthetic optimal state trajectories. For each synthetic instance, we sample a start state from the states that can reach the terminal; in the trap setting, we additionally sample the trap state from the two pre-terminal states in the last hub. We then compute the corresponding shortest valid trajectory to the absorbing terminal state on the known transition graph (with the trap removed when applicable) and train the model to emit that state-id sequence. This SFT checkpoint is used only as initialization, and all TL-GRPO, NAL-GRPO, and CAL-GRPO runs then use the same Ver@K GRPO setup described below.

\textbf{Rollout and decoding.} We use SGLang for rollout generation~\cite{sglang_repo}. Decoding uses temperature 0.6, top-$p$ 0.95, and top-$k$ 20. The GRPO group size (number of rollouts per prompt) is $N{=}20$ for MATH and $N{=}16$ for Maze. We evaluate every 5 iterations. Unless otherwise stated, reported curves correspond to a single training run. For the $5{\times}5$ and $9{\times}9$ Maze Ver@2 experiments, we repeat training with five random seeds and report the mean together with 95\% confidence intervals across runs. We focus this multi-seed evaluation on these Maze Ver@2 settings due to computational constraints.

For Markov chain setting, we generate rollouts with a lightweight token-level simulator (no tool calls): the model emits discrete state-id tokens in $\{0,\dots,n_{\text{states}}{-}1\}$, with attempts separated by an injected RETRY token that resets the attempt to the same start state. Concretely, the RETRY token id is set to $n_{\text{states}}$ and the PAD token id to $n_{\text{states}}{+}1$, where $n_{\text{states}} = n_{\text{hubs}}\cdot m$. We sample with temperature $1.0$ and disable top-$k$ truncation (top-$k{=}0$). For each update, we sample a batch of start states, repeat each start state to form a rollout group, and roll out $K$ attempts per start. In the trap variant, start states are sampled only from hubs before the last hub so that the sampled trap remains ahead in the chain.

\textbf{Sequence lengths.} We cap each attempt to 512 output tokens for MATH and for the $5{\times}5$ Maze task, and to 1024 new tokens for the $9{\times}9$ Maze task. With $K{=}2$, the total assistant budget is 1024 tokens (MATH and $5{\times}5$ Maze) or 2048 tokens ($9{\times}9$ Maze) plus a small buffer for system/feedback, yielding a maximum rollout length of 1280 tokens (MATH and $5{\times}5$ Maze) or 2304 tokens ($9{\times}9$ Maze).

For the Markov chain setting, we cap the length of each attempt to a fixed token budget. By default this budget is set to the total number of states, but it is never allowed to exceed the model’s maximum context length. We choose the per-attempt cap so that all attempts can be packed into a single context window, while also reserving space for the inserted \texttt{RETRY} separator tokens between attempts. An attempt terminates if it reaches the absorbing terminal state, emits \texttt{RETRY}/\texttt{PAD}/EOS, outputs an out-of-range state id (invalid), or hits the per-attempt token cap.

\textbf{Optimization.} We train with AdamW at learning rate $10^{-6}$ with a KL penalty coefficient of 0.001, a clip ratio $\epsilon=0.2$, and no entropy bonus. We use a train batch size of 256, PPO mini-batch size 64, and PPO micro-batch size 8 per GPU with dynamic batching. We train for 2 epochs on MATH and 1 epoch on Maze. We used 2$\times$ NVIDIA L40S or A100 GPUs during the training.

For Markov chain setting, we use AdamW with parameters $(0.9,0.95)$, PPO-style ratio clipping with $\epsilon=0.2$. We use learning rate $10^{-5}$ and a KL penalty coefficient of $0.02$. For non-trap runs: We train for 1600 steps, batch size of 8, rollout group size $N=40$, and $K=4$ attempts per start state. Trap runs: We train for 5000 steps, batch size of 16, rollout group size $N=40$, and $K=2$ attempts per start state. We evaluate using $500\times n_{\text{states}}$ start states and evaluate every 40 iterations.

\begin{tcolorbox}[title=Verification feedback in MATH, colback=white, colframe=black, breakable]
\ttfamily\footnotesize
Your previous attempt(s) were incorrect. Review them and try again.\\
Solve the problem again from scratch, correcting any mistakes.\\
Final answer must be \verb|\boxed{<answer>}|.
\end{tcolorbox}

\begin{tcolorbox}[title=Verification feedback in Maze, colback=white, colframe=black, breakable]
\ttfamily\footnotesize
Your previous attempt(s) were incorrect. Review them and try again.\\
Solve the maze again from scratch, correcting any mistakes.\\
Final answer must be a move string over U,D,L,R only in \verb|<answer>...</answer>|.
\end{tcolorbox}

\begin{tcolorbox}[title=Example Answer, colback=white, colframe=black]
\centering
\ttfamily\footnotesize
\texttt{<answer>UU</answer>}
\end{tcolorbox}

\begin{tcolorbox}[title=Example Maze Question, colback=white, colframe=black, breakable]
\ttfamily\footnotesize
You need to solve the following maze.\\

'*' denotes the wall that you cannot walk through, '.' denotes available area that you can walk through. 'S' denotes the starting point, 'E' denotes the destination.\\

You need to start from the starting point and cross through the available area to reach the destination. There are four movement actions, including Left, Right, Up, Down.\\

Use L to denote Left movement, R to denote Right movement, U to denote Up movement, and D to denote Down movement.\\

You can analyze the maze to find the correct path, and you should write the final path in the <answer></answer>, e.g., <answer>LLRRDUL</answer>.\\

\#\# Maze

{\ttfamily\obeylines\obeyspaces\noindent
*********
*.....*E*
*****.*.*
*...*..S*
*.*****.*
*.......*
*.*******
*.......*
*********
}

Now try to analyze the maze and put the final path in the <answer></answer>.
\end{tcolorbox}

\begin{tcolorbox}[title=Markov Chain reward and termination logic, colback=white, colframe=black, breakable]
\ttfamily\footnotesize
For each attempt, the model autoregressively emits state-id tokens.\\
Reward=1 iff:
(1) the terminal absorbing state is reached; (2) all transitions follow the allowed adjacency; (3) the state id never decreases; (4) the number of steps equals the shortest-path distance from the start.\\
Trap variant: a trap state is sampled per rollout from two pre-terminal states in the last hub; visiting it makes Reward=0, and
shortest paths are computed with the trap removed.
\end{tcolorbox}

\begin{tcolorbox}[title=Example Trajectories for Markov Chain for Ver@2, colback=white, colframe=black, breakable]
\ttfamily\footnotesize
Example parameters: $n_{\text{hubs}}=5$, $m=6$ ($n_{\text{states}}=30$), start $s_0=0$, terminal=29.\\[2pt]
Attempt 1 (failure, backtrack):
\quad 0 $\rightarrow$ 2 $\rightarrow$ 0 $\rightarrow$ 1 $\rightarrow$ \dots\\
\texttt{RETRY}\\
Attempt 2 (success, shortest path):
\quad 0 $\rightarrow$ 1 $\rightarrow$ 3 $\rightarrow$ 5 $\rightarrow$ \dots $\rightarrow$ 27 $\rightarrow$ 29.\\[2pt]
Trap variant (e.g., trap=28): the above success remains valid; any visit to 28 during an attempt yields $r_k=0$.
\end{tcolorbox}

\subsection{GRPO Variants}\label{app:verk_estimators}

Fix a prompt instance $x$ (and its ground-truth $y$), and draw a GRPO group of $N$ i.i.d. multi-attempt
trajectories $\tau_1,\dots,\tau_N \sim p_\theta(\cdot\mid x)$, as in \Cref{sect:main_results}.
Each trajectory $\tau_n$ executes attempts $k\in\{1,\dots,K\}$ with early stopping: at attempt $k$ the
policy produces an output $y_{n,k}\sim\pi_\theta(\cdot\mid s_{n,k-1})$ and a deterministic verifier returns
$r_{n,k}\in\{0,1\}$. The stopping time is
\[
T_n := \min\{k\le K: r_{n,k}=1\}\quad \text{(or $T_n=K$ if no success).}
\]
Define the reach indicator $\mathbf{1}[T_n\ge k]$ and the reached-set
\[
S_k := \{n\in[N]: T_n\ge k\}.
\]
We also define a token mask $m_{n,t}\in\{0,1\}$ indicating model-generated tokens,
and let $\mathrm{turn}(n,t)\in\{1,\dots,K\}$ denote the attempt index of token $t$. We use the standard GRPO normalization for all variants below, by dividing centered rewards by the corresponding within-group empirical standard deviation. This rescaling is not part of the RLOO derivations in Sections~3 and~5, but we use it in practice because it improves optimization, and \Cref{fig:maze_ver2_stdnorm} reports a direct ablation for this on Maze Ver@2.

\textbf{Trajectory-Level GRPO (TL-GRPO).}
Let the trajectory-level Ver@K reward be
\[
r^{\mathrm{TL}}_n := r_{n,T_n} = \mathbf{1}\Big[\max_{k\le K} r_{n,k}=1\Big].
\]
We compute the usual GRPO group-normalized advantage
\[
a^{\mathrm{TL}}_n = \frac{r^{\mathrm{TL}}_n-\mu}{\sigma},
\qquad
\mu=\frac{1}{N}\sum_{n=1}^N r^{\mathrm{TL}}_n,\quad
\sigma=\mathrm{Std}\big(\{r^{\mathrm{TL}}_n\}_{n=1}^N\big),
\]
and assign token-level advantages
\[
A_{n,t} = a^{\mathrm{TL}}_n \cdot m_{n,t}.
\]

\textbf{Naive Attempt-Level GRPO (NAL-GRPO).}
For each attempt $k$, we normalize the per-attempt verifier outcomes $r_{n,k}$ \emph{only over trajectories that
reach attempt $k$}:
\[
a^{\mathrm{NAL}}_{n,k}
=
\frac{r_{n,k}-\mu_k}{\sigma_k}
\quad\text{for } n\in S_k,
\qquad
\mu_k=\frac{1}{|S_k|}\sum_{n\in S_k} r_{n,k},\quad
\sigma_k=\mathrm{Std}\big(\{r_{n,k}\}_{n\in S_k}\big).
\]
Each token inherits the attempt-level advantage of its attempt:
\[
A_{n,t}
=
a^{\mathrm{NAL}}_{n,\mathrm{turn}(n,t)} \cdot m_{n,t}.
\]

\textbf{Calibrated Attempt-Level GRPO (CAL-GRPO).}
CAL-GRPO starts from NAL-GRPO and reweights attempts by a future-only factor that approximates each attempt’s
marginal contribution to final Ver@K success. Define the empirical per-attempt success rates
\[
\hat p_k := \frac{1}{|S_k|}\sum_{n\in S_k} r_{n,k}.
\]
Then define future-only weights:
\[
w_k = \prod_{j=k+1}^{K} (1-\hat p_j),
\qquad
w_K = 1.
\]
We renormalize weights within the group to keep update scales comparable, e.g.,
\[
\bar w = \frac{1}{|\mathcal{K}|}\sum_{k\in \mathcal{K}} w_k,
\quad \mathcal{K}:=\{k\in[K]: |S_k|>0\},
\qquad
\tilde w_k = \frac{w_k}{\bar w}.
\]
Final token-level advantages are
\[
A_{n,t}
=
\tilde w_{\mathrm{turn}(n,t)}
a^{\mathrm{NAL}}_{n,\mathrm{turn}(n,t)}
m_{n,t}.
\]

%% file: sections/appendix/additional_details.tex
\section{Additional Experimental Results for \Cref{sec:experiments}}\label{app:add_results}

\begin{figure*}[t!]
    \centering
    \begin{subfigure}[t]{0.32\textwidth}
        \centering
        \includegraphics[width=\linewidth]{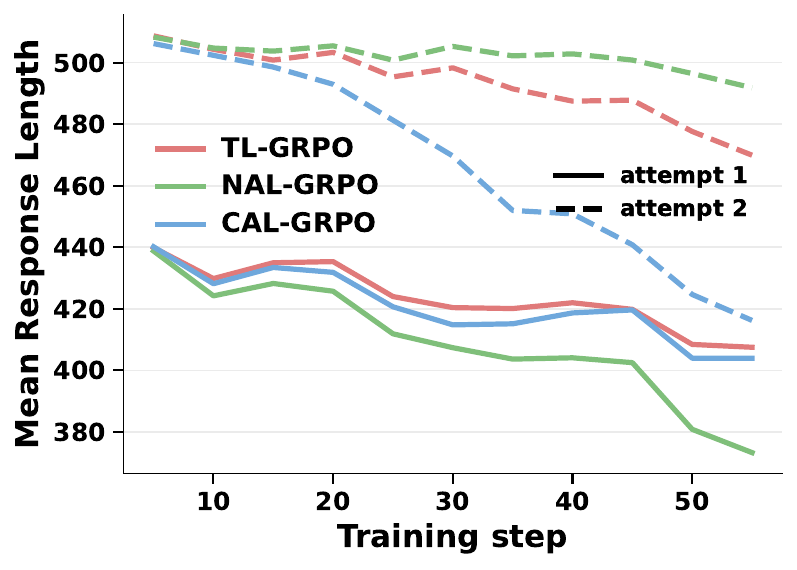}
        \caption{\textbf{Mean response length} per attempt vs.\ training step (baselines labeled in legend: TL-GRPO, NAL-GRPO, CAL-GRPO).}
        \label{fig:app_math_mean_len}
    \end{subfigure}
    \hfill
    \begin{subfigure}[t]{0.32\textwidth}
        \centering
        \includegraphics[width=\linewidth]{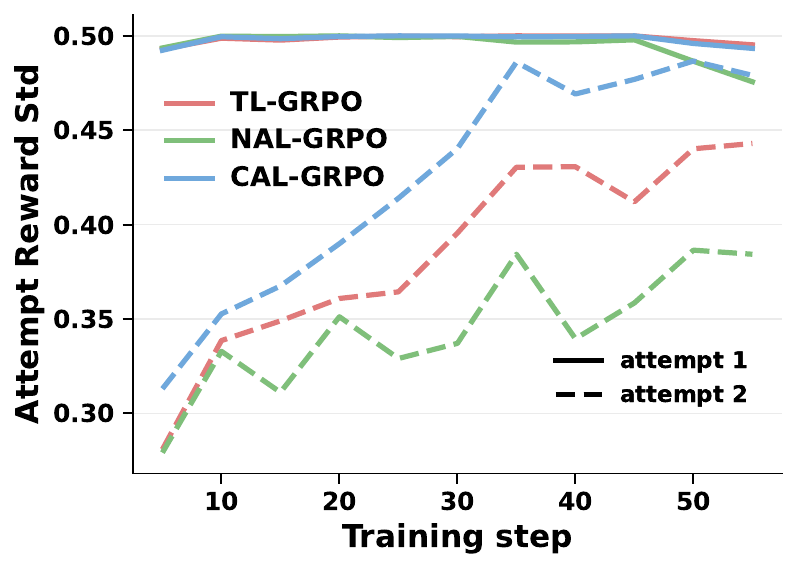}
        \caption{\textbf{Attempt reward standard deviation} within each GRPO group (signal magnitude for normalization), shown separately for attempts 1 and 2.}
        \label{fig:app_math_reward_std}
    \end{subfigure}
    \hfill
    \begin{subfigure}[t]{0.32\textwidth}
        \centering
        \includegraphics[width=\linewidth]{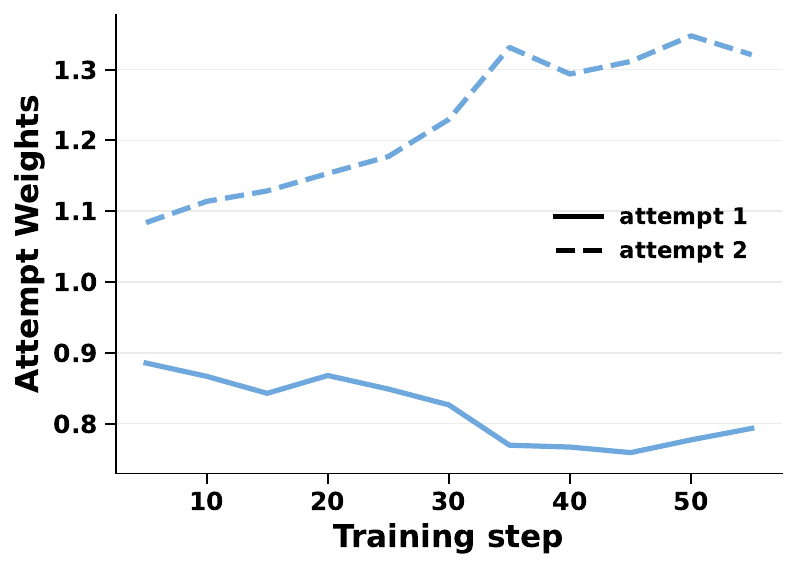}
        \caption{\textbf{Attempt weights} used by CAL-GRPO (future-only reweighting), plotted for attempts 1 and 2 over training.}
        \label{fig:app_math_weights}
    \end{subfigure}

    \caption{\textbf{Additional metrics for Ver@2 training on MATH.} (a) Response length dynamics per attempt. (b) Per-attempt reward variability used by normalization. (c) Learned future-only attempt reweighting in CAL-GRPO.}
    \label{fig:app_math_diagnostics}
\end{figure*}

\textbf{Discussion on the additional results.}
\Cref{fig:app_math_diagnostics,fig:app_maze_diagnostics,fig:app_maze9_diagnostics,fig:app_maze9_ver4_diagnostics,fig:app_maze7_ver4_diagnostics,fig:maze_ver2_groupsize}
report additional diagnostics and ablations for the three estimators (TL-GRPO/NAL-GRPO/CAL-GRPO). For Maze, we include Ver@2 on $5{\times}5$ and $9{\times}9$ (see also \Cref{fig:maze9_ver2_acc,fig:maze9_ver2_estimators}),
as well as Ver@4 on $7{\times}7$ and $9{\times}9$ (see \Cref{fig:maze7_ver4_estimators,fig:maze9_ver4_estimators}),
and rollout-group-size and standard-deviation-normalization ablations under Ver@2 (see \Cref{fig:maze_ver2_groupsize}). We make the following observations:

\textbf{Weights track marginal contribution.}
Across tasks, the future-dependent weights in CAL-GRPO shift learning signal away from early attempts and toward later attempts as training progresses.
For Ver@2, CAL-GRPO downweights attempt 1 and upweights attempt 2 over training
(\Cref{fig:app_math_weights,fig:app_maze_weights,fig:app_maze9_weights}).
For Ver@4, this pattern generalizes: attempts 1--2 are increasingly downweighted while attempts 3--4 are upweighted
(\Cref{fig:app_maze9_ver4_weights,fig:app_maze7_ver4_weights}),
which counteracts the fact that later attempts are reached on a shrinking subset of hard failures and would otherwise receive a vanishing share of updates.
The reweighting is strongest in easier/near-saturating regimes---notably $5{\times}5$ under Ver@2 and $7{\times}7$ under Ver@4---where retries become highly reliable and the marginal value of further improving the first attempt diminishes.
On $9{\times}9$ (both Ver@2 and Ver@4), weights remain less extreme, consistent with a harder regime where multiple attempts continue to contribute meaningfully to final Ver@K accuracy (\Cref{fig:maze9_ver2_acc,fig:maze9_ver4_acc}).

\textbf{Gains without longer generations.}
Mean response length decreases over training on MATH and across all Maze settings we report, for both Ver@2 and Ver@4
(\Cref{fig:app_math_mean_len,fig:app_maze_mean_len,fig:app_maze9_mean_len,fig:app_maze9_ver4_mean_len,fig:app_maze7_ver4_mean_len}),
indicating that Ver@K gains are not driven by longer generations.
On $5{\times}5$ (Ver@2), CAL-GRPO notably shortens the second attempt late in training, directly reducing test-time compute under early stopping.
On $9{\times}9$ (Ver@2), the dominant effect is a steady reduction in attempt-1 length with controlled retry length, suggesting improved first-pass planning without incurring longer corrections.
For Ver@4 on $7{\times}7$ and $9{\times}9$, response lengths generally drop across attempts 1-4, with later attempts often shortening substantially late in training (\Cref{fig:app_maze7_ver4_mean_len,fig:app_maze9_ver4_mean_len}), consistent with learning more efficient correction rather than longer retries.

\textbf{Attempt-dependent normalization matters.}
The within-group reward standard deviation differs across attempts and evolves over training for both Ver@2 and Ver@4
(\Cref{fig:app_math_reward_std,fig:app_maze_reward_std,fig:app_maze9_reward_std,fig:app_maze9_ver4_reward_std,fig:app_maze7_ver4_reward_std}),
reinforcing the claim that a single trajectory-level normalization can mis-scale gradients for later attempts. This heterogeneity becomes more salient as $K$ grows: under Ver@4, attempts 3--4 can exhibit markedly different reward-variance dynamics than attempts 1--2. On $5{\times}5$ with Ver@2, attempt-2 variability rises early but collapses late as retry outcomes become near-deterministic once performance approaches ceiling;
by contrast, on $9{\times}9$ with Ver@2 the retry variance remains substantial throughout training, supporting the fact that Ver@2 does not fully saturate and learning signal on retries remains informative (\Cref{fig:maze9_ver2_acc}).
Similarly, for Ver@4, variance patterns differ across $7{\times}7$ vs.\ $9{\times}9$ (\Cref{fig:app_maze7_ver4_reward_std,fig:app_maze9_ver4_reward_std}), aligning with the corresponding Ver@4 learning curves (\Cref{fig:maze7_ver4_acc,fig:maze9_ver4_acc}).
Overall, these demonstrations help explain why CAL-GRPO’s reweighting becomes increasingly important for larger $K$ as it preserves learning pressure on later-attempt correction when those retries remain a key driver of marginal Ver@K improvements.

\textbf{Ablation on standard deviation normalization.}
Consistent with the attempt-dependent variance patterns above, the direct Maze Ver@2 ablation in \Cref{fig:maze_ver2_stdnorm} shows that dividing by the within-group empirical standard deviation improves sample efficiency for TL-GRPO, NAL-GRPO, and CAL-GRPO, with the clearest gain for CAL-GRPO. Although our theoretical derivations are written in an RLOO-style form without this GRPO rescaling, this empirical result motivates our use of the standard GRPO normalization in all experiments.

\textbf{Ablation on group size.}
CAL-GRPO estimates its weights directly from the current rollout group. For Ver@2, the quantity is
$\hat p_2 = \frac{1}{|S_2|}\sum_{n\in S_2} r_{n,2}$,
which is estimated from the trajectories in the current group that reach attempt 2, and the calibrated weights reduce to $w_1 = 1-\hat p_2$ and $w_2 = 1$. Hence, the quality of the calibration depends on how well the current rollouts estimate second attempt success.
\Cref{fig:maze_ver2_groupsize_acc,fig:maze_ver2_groupsize_reached} compares rollout group sizes $N=10$ and $N=16$ on Maze under Ver@2.
Using $N=16$ improves sample efficiency for all three methods, with the clearest gain for CAL-GRPO early in training.
Larger groups yield more trajectories that reach attempt 2, so both the per-attempt normalization and the estimate of $\hat p_2$ are based on a larger and therefore less noisy sample of current rollouts. This becomes increasingly relevant later in training, when attempt-1 success rises and the reached set $S_2$ shrinks. These results motivate our use of a moderately large rollout group size for CAL-GRPO throughout our experiments.

\begin{figure*}[t!]
    \centering
    \begin{subfigure}[t]{0.32\textwidth}
        \centering
        \includegraphics[width=\linewidth]{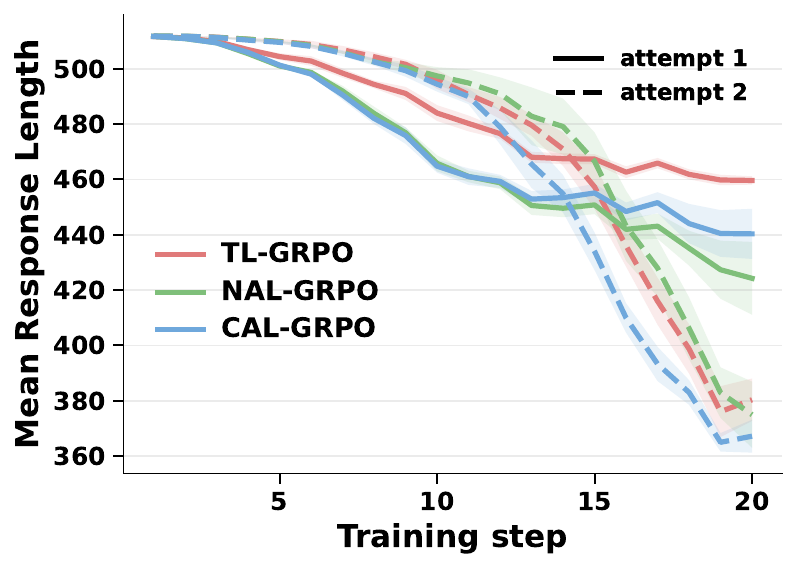}
        \caption{\textbf{Mean response length} per attempt vs.\ training step (baselines labeled in legend: TL-GRPO, NAL-GRPO, CAL-GRPO).}
        \label{fig:app_maze_mean_len}
    \end{subfigure}
    \hfill
    \begin{subfigure}[t]{0.32\textwidth}
        \centering
        \includegraphics[width=\linewidth]{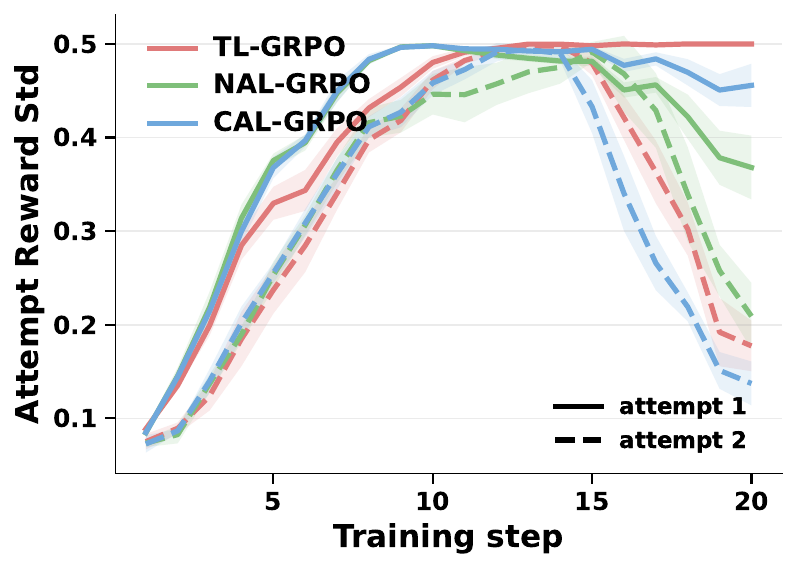}
        \caption{\textbf{Attempt reward standard deviation} within each GRPO group, shown separately for attempts 1 and 2.}
        \label{fig:app_maze_reward_std}
    \end{subfigure}
    \hfill
    \begin{subfigure}[t]{0.32\textwidth}
        \centering
        \includegraphics[width=\linewidth]{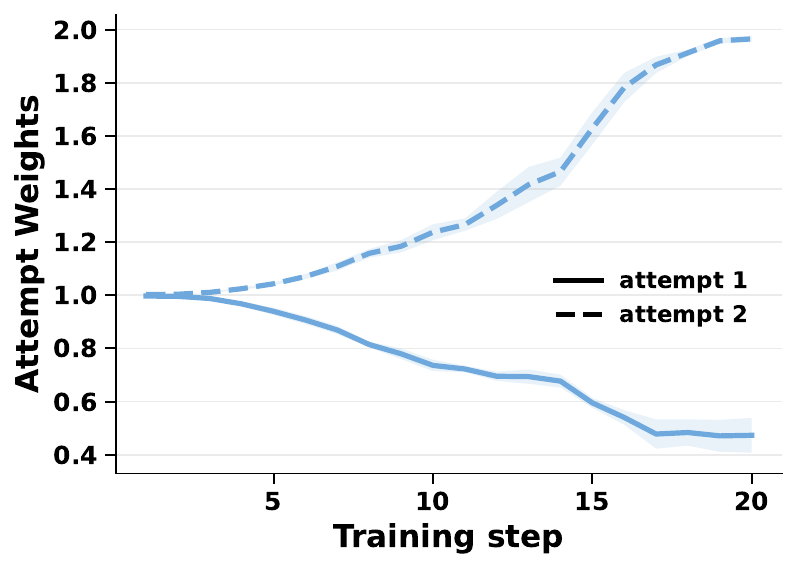}
        \caption{\textbf{Attempt weights} used by CAL-GRPO (future-only reweighting), for attempts 1 and 2 over training.}
        \label{fig:app_maze_weights}
    \end{subfigure}

    \caption{\textbf{Additional metrics for Ver@2 training on 5x5 Maze task.} (a) Response length dynamics per attempt. (b) Per-attempt reward variation used during the per attempt normalization. (c) Learned future-only attempt reweighting in CAL-GRPO.}
    \label{fig:app_maze_diagnostics}
\end{figure*}

\begin{figure*}[t!]
    \centering
    \begin{subfigure}[t]{0.32\textwidth}
        \centering
        \includegraphics[width=\linewidth]{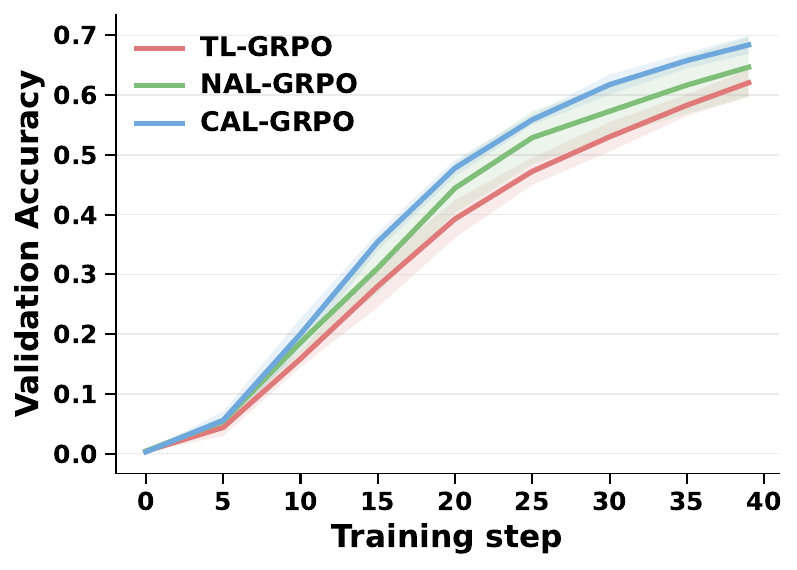}
        \caption{\textbf{Ver@2 validation accuracy on 9$\times$9 Maze} across training; CAL-GRPO achieves the highest Ver@2, with NAL-GRPO improving over TL-GRPO.}
        \label{fig:maze9_ver2_acc}
    \end{subfigure}
    \hfill
    \begin{subfigure}[t]{0.32\textwidth}
        \centering
        \includegraphics[width=\linewidth]{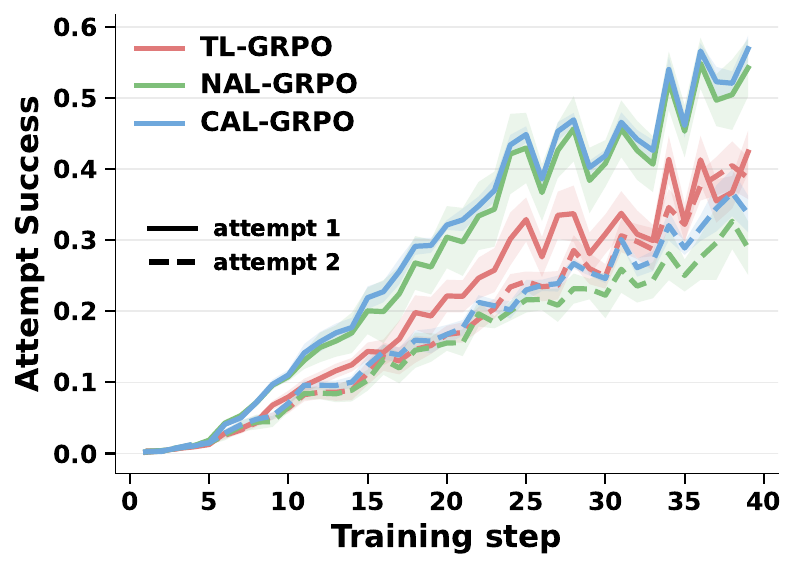}
        \caption{\textbf{Attempt-wise success.} Solid: attempt-1 success. Dashed: attempt-2 success given attempt-1 failure; NAL-GRPO boost attempt-1 success while CAL improves correction.}
        \label{fig:maze9_ver2_attempt_success}
    \end{subfigure}
    \hfill
    \begin{subfigure}[t]{0.32\textwidth}
        \centering
        \includegraphics[width=\linewidth]{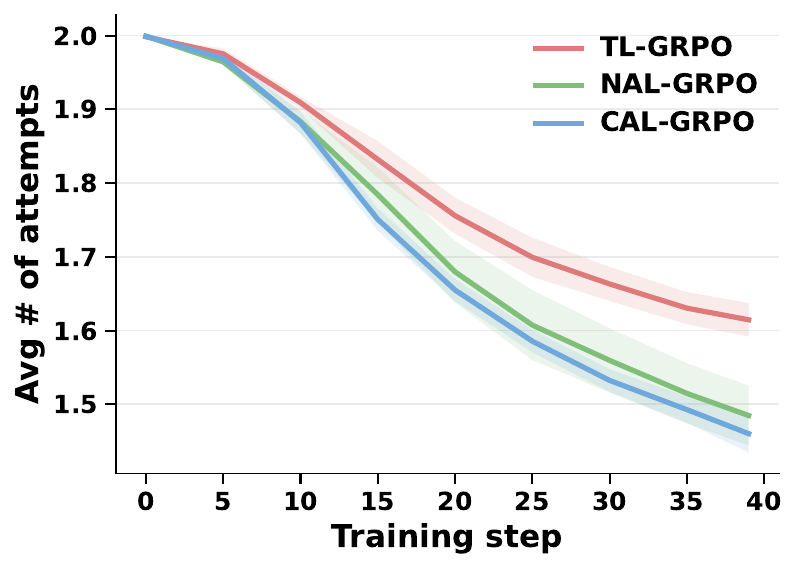}
        \caption{\textbf{Average attempts} Ver@2; NAL/CAL reduce number of attempts relative to TL-GRPO.}
        \label{fig:maze9_ver2_avg_attempts}
    \end{subfigure}

    \caption{\textbf{Optimizing the Ver@2 objective on 9$\times$9 Maze with GRPO variants.} We compare Trajectory-Level GRPO (TL-GRPO), Naive Attempt-Level GRPO (NAL-GRPO), and Calibrated Attempt-Level GRPO (CAL-GRPO). \textbf{(a)} CAL-GRPO attains the best Ver@2. \textbf{(b)} Improvements mainly come from higher attempt-1 success with competitive attempt-2 correction. \textbf{(c)} NAL/CAL reduce the average number of attempts compared to TL-GRPO.}
    \label{fig:maze9_ver2_estimators}
\end{figure*}

\begin{figure*}[t!]
    \centering
    \begin{subfigure}[t]{0.32\textwidth}
        \centering
        \includegraphics[width=\linewidth]{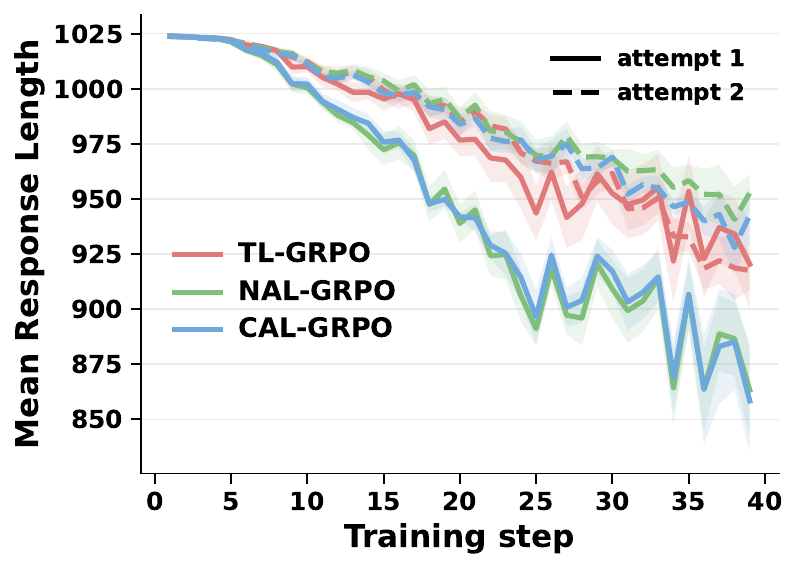}
        \caption{\textbf{Mean response length} per attempt vs.\ training step (baselines labeled in legend: TL-GRPO, NAL-GRPO, CAL-GRPO).}
        \label{fig:app_maze9_mean_len}
    \end{subfigure}
    \hfill
    \begin{subfigure}[t]{0.32\textwidth}
        \centering
        \includegraphics[width=\linewidth]{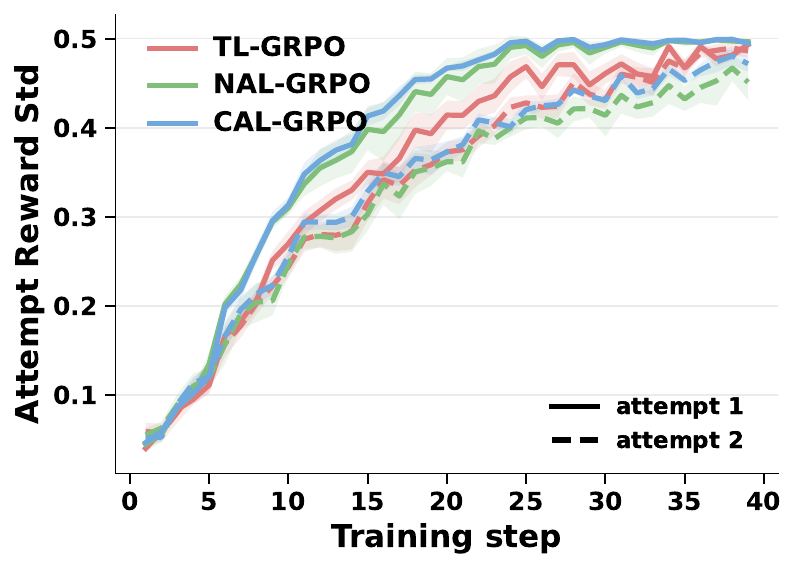}
        \caption{\textbf{Attempt reward standard deviation} within each GRPO group, shown separately for attempts 1 and 2.}
        \label{fig:app_maze9_reward_std}
    \end{subfigure}
    \hfill
    \begin{subfigure}[t]{0.32\textwidth}
        \centering
        \includegraphics[width=\linewidth]{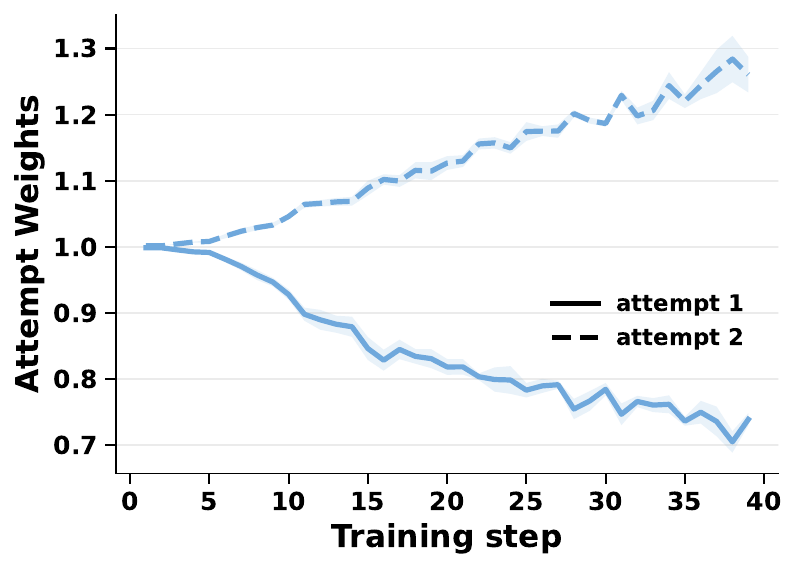}
        \caption{\textbf{Attempt weights} used by CAL-GRPO (future-only reweighting), for attempts 1 and 2 over training.}
        \label{fig:app_maze9_weights}
    \end{subfigure}

    \caption{\textbf{Additional metrics for Ver@2 training on 9$\times$9 Maze task.} (a) Response length dynamics per attempt. (b) Per-attempt reward variation used during the per-attempt normalization. (c) Learned future-only attempt reweighting in CAL-GRPO.}
    \label{fig:app_maze9_diagnostics}
\end{figure*}

\begin{figure*}[t!]
    \centering
    \begin{subfigure}[t]{0.32\textwidth}
        \centering
        \includegraphics[width=\linewidth]{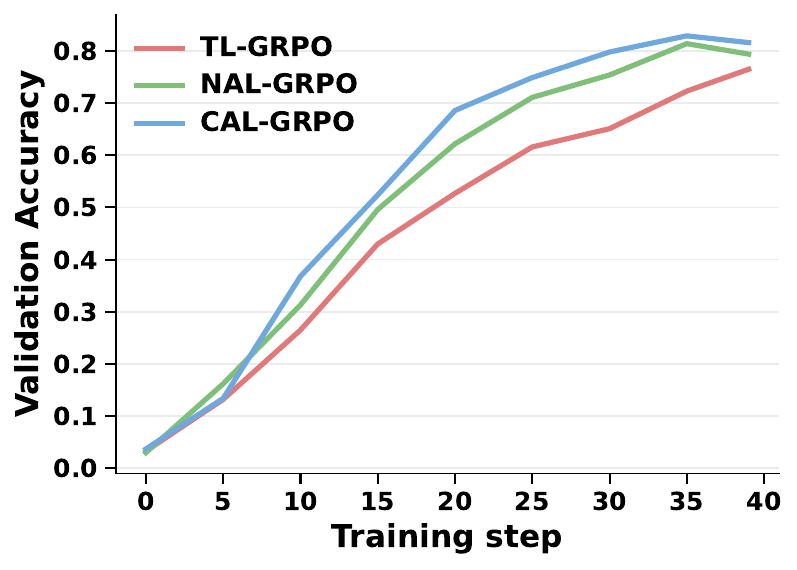}
        \caption{\textbf{Ver@4 validation accuracy on 9$\times$9 Maze} across training; CAL-GRPO achieves the highest Ver@4, with NAL-GRPO improving over TL-GRPO.}
        \label{fig:maze9_ver4_acc}
    \end{subfigure}
    \hfill
    \begin{subfigure}[t]{0.32\textwidth}
        \centering
        \includegraphics[width=\linewidth]{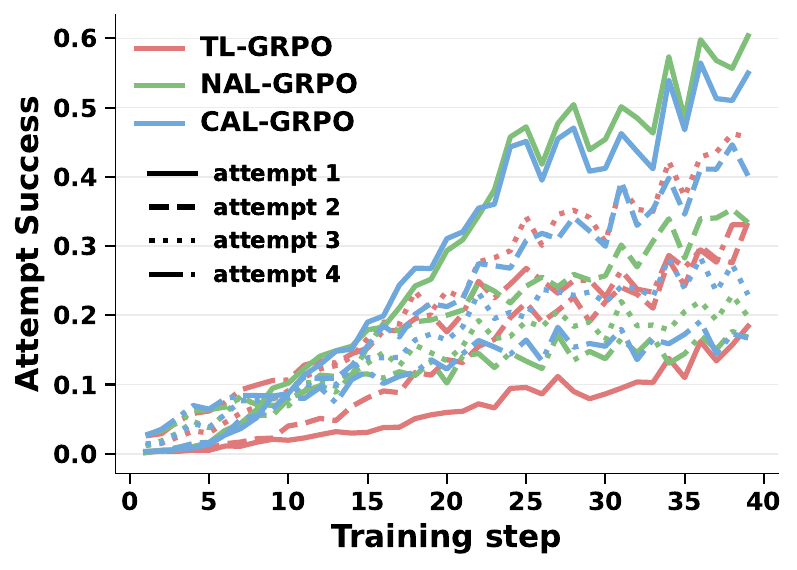}
        \caption{\textbf{Attempt-wise success.} Curves show success on attempt $i$ conditioned on failing all earlier attempts. NAL-GRPO boosts early-attempt success while CAL-GRPO improves later-attempt correction.}
        \label{fig:maze9_ver4_attempt_success}
    \end{subfigure}
    \hfill
    \begin{subfigure}[t]{0.32\textwidth}
        \centering
        \includegraphics[width=\linewidth]{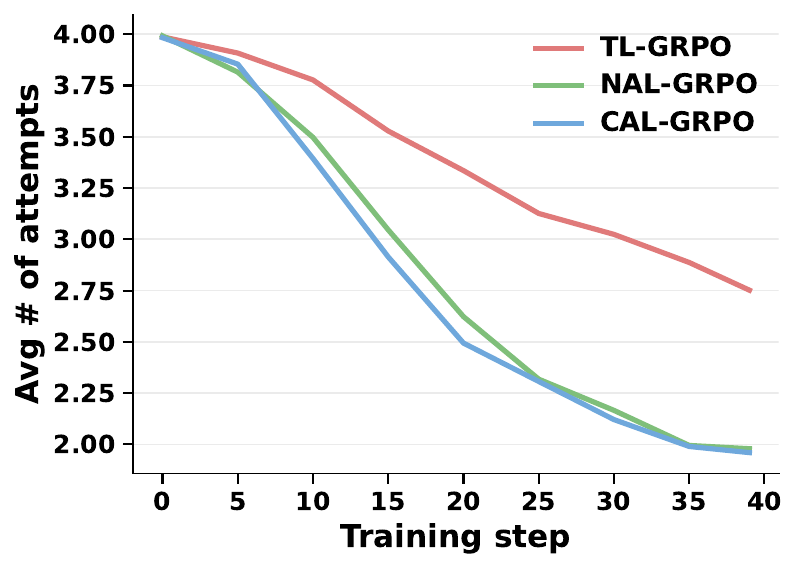}
        \caption{\textbf{Average attempts} under Ver@4; NAL/CAL reduce the number of attempts relative to TL-GRPO.}
        \label{fig:maze9_ver4_avg_attempts}
    \end{subfigure}

    \caption{\textbf{Optimizing the Ver@4 objective on 9$\times$9 Maze with GRPO variants.} We compare Trajectory-Level GRPO (TL-GRPO), Naive Attempt-Level GRPO (NAL-GRPO), and Calibrated Attempt-Level GRPO (CAL-GRPO). \textbf{(a)} CAL-GRPO attains the best Ver@4. \textbf{(b)} Improvements reflect stronger attempt-wise performance, with CAL-GRPO emphasizing correction on later attempts. \textbf{(c)} NAL/CAL reduce the average number of attempts compared to TL-GRPO.}
    \label{fig:maze9_ver4_estimators}
\end{figure*}

\begin{figure*}[t!]
    \centering
    \begin{subfigure}[t]{0.32\textwidth}
        \centering
        \includegraphics[width=\linewidth]{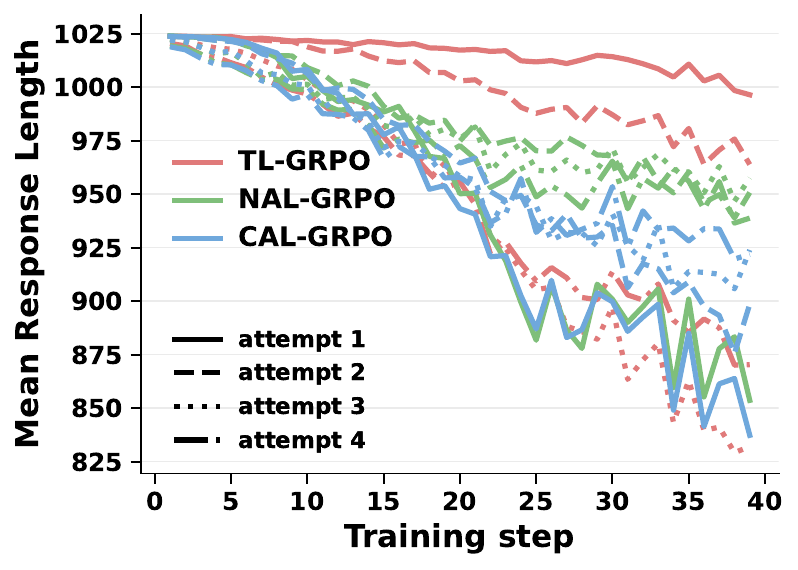}
        \caption{\textbf{Mean response length} per attempt vs.\ training step (baselines labeled in legend: TL-GRPO, NAL-GRPO, CAL-GRPO).}
        \label{fig:app_maze9_ver4_mean_len}
    \end{subfigure}
    \hfill
    \begin{subfigure}[t]{0.32\textwidth}
        \centering
        \includegraphics[width=\linewidth]{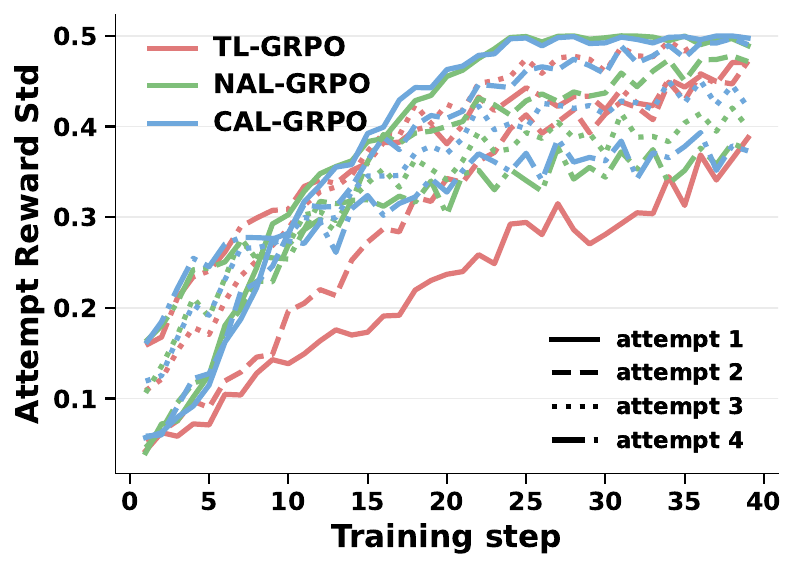}
        \caption{\textbf{Attempt reward standard deviation} within each GRPO group, shown separately for attempts 1--4.}
        \label{fig:app_maze9_ver4_reward_std}
    \end{subfigure}
    \hfill
    \begin{subfigure}[t]{0.32\textwidth}
        \centering
        \includegraphics[width=\linewidth]{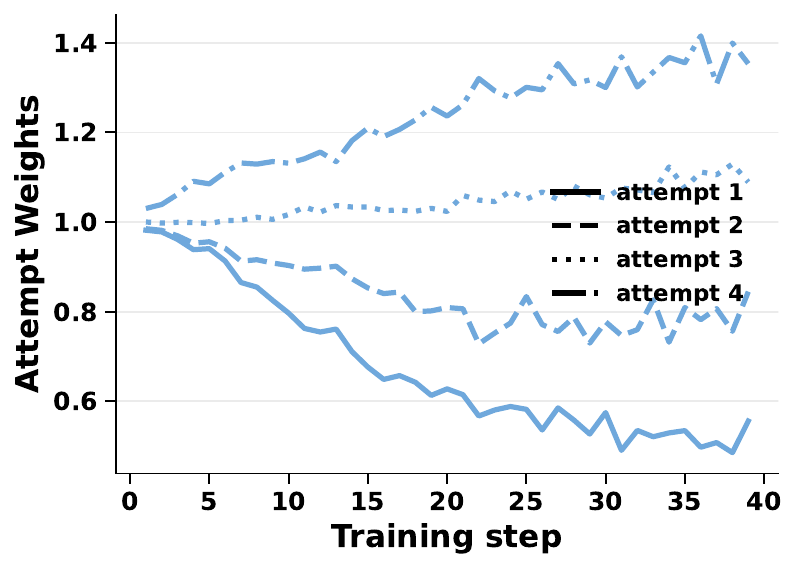}
        \caption{\textbf{Attempt weights} used by CAL-GRPO (future-only reweighting), for attempts 1--4 over training.}
        \label{fig:app_maze9_ver4_weights}
    \end{subfigure}

    \caption{\textbf{Additional metrics for Ver@4 training on 9$\times$9 Maze task.} (a) Response length dynamics per attempt. (b) Per-attempt reward variation used during the per-attempt normalization. (c) Learned future-only attempt reweighting in CAL-GRPO.}
    \label{fig:app_maze9_ver4_diagnostics}
\end{figure*}

\begin{figure*}[t!]
    \centering
    \begin{subfigure}[t]{0.32\textwidth}
        \centering
        \includegraphics[width=\linewidth]{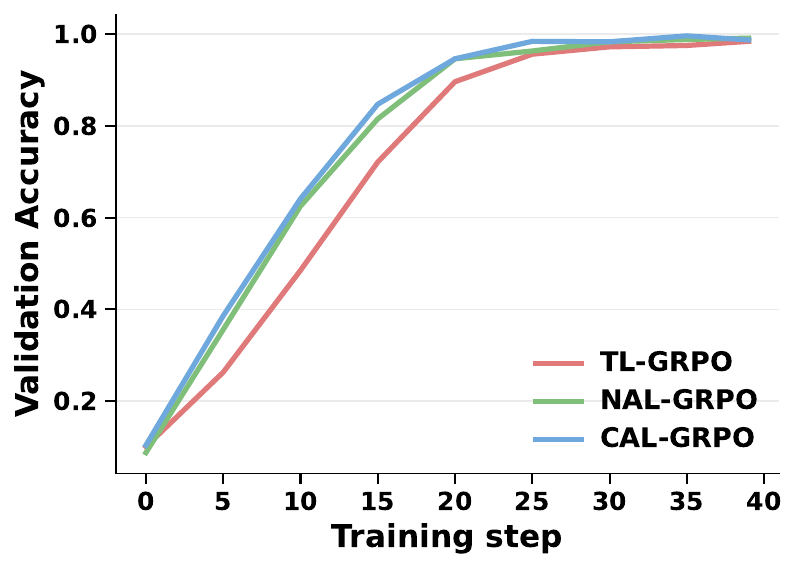}
        \caption{\textbf{Ver@4 validation accuracy on 7$\times$7 Maze} across training; CAL-GRPO achieves the highest Ver@4, with NAL-GRPO improving over TL-GRPO.}
        \label{fig:maze7_ver4_acc}
    \end{subfigure}
    \hfill
    \begin{subfigure}[t]{0.32\textwidth}
        \centering
        \includegraphics[width=\linewidth]{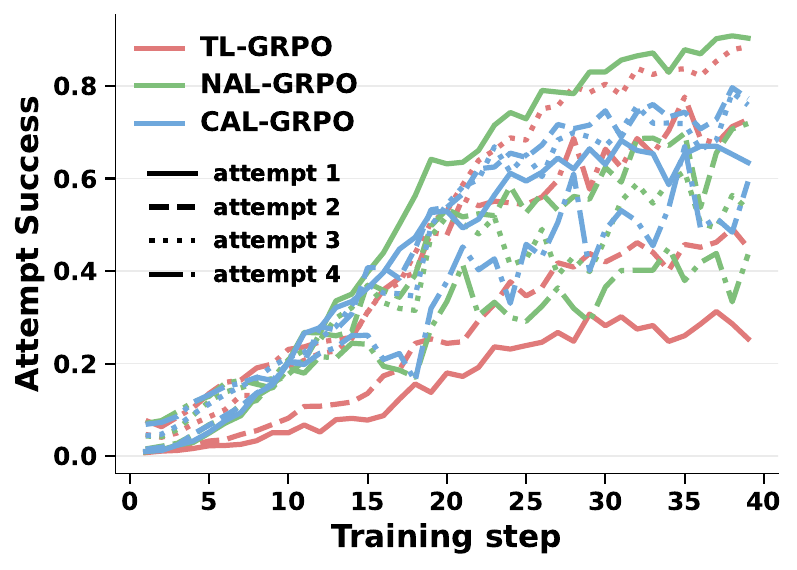}
        \caption{\textbf{Attempt-wise success.} Line styles denote attempts 1--4. Curves show success on attempt $i$ given all earlier attempts failed; NAL-GRPO boosts early success while CAL-GRPO improves correction.}
        \label{fig:maze7_ver4_attempt_success}
    \end{subfigure}
    \hfill
    \begin{subfigure}[t]{0.32\textwidth}
        \centering
        \includegraphics[width=\linewidth]{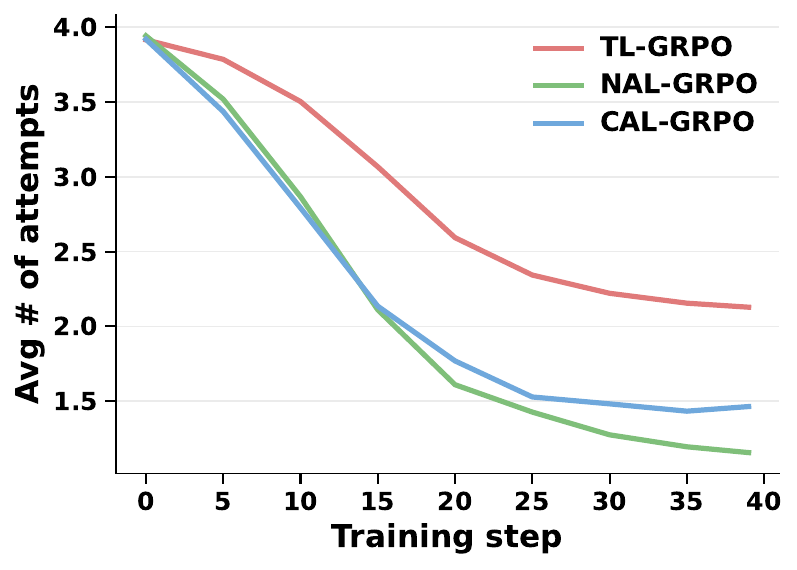}
        \caption{\textbf{Average attempts} under Ver@4; NAL/CAL reduce the number of attempts relative to TL-GRPO.}
        \label{fig:maze7_ver4_avg_attempts}
    \end{subfigure}

    \caption{\textbf{Optimizing the Ver@4 objective on 7$\times$7 Maze with GRPO variants.} We compare Trajectory-Level GRPO (TL-GRPO), Naive Attempt-Level GRPO (NAL-GRPO), and Calibrated Attempt-Level GRPO (CAL-GRPO). \textbf{(a)} CAL-GRPO attains the best Ver@4. \textbf{(b)} Improvements reflect stronger attempt-wise performance, with CAL-GRPO emphasizing correction on later attempts. \textbf{(c)} NAL/CAL reduce the average number of attempts compared to TL-GRPO.}
    \label{fig:maze7_ver4_estimators}
\end{figure*}

\begin{figure*}[t!]
    \centering
    \begin{subfigure}[t]{0.32\textwidth}
        \centering
        \includegraphics[width=\linewidth]{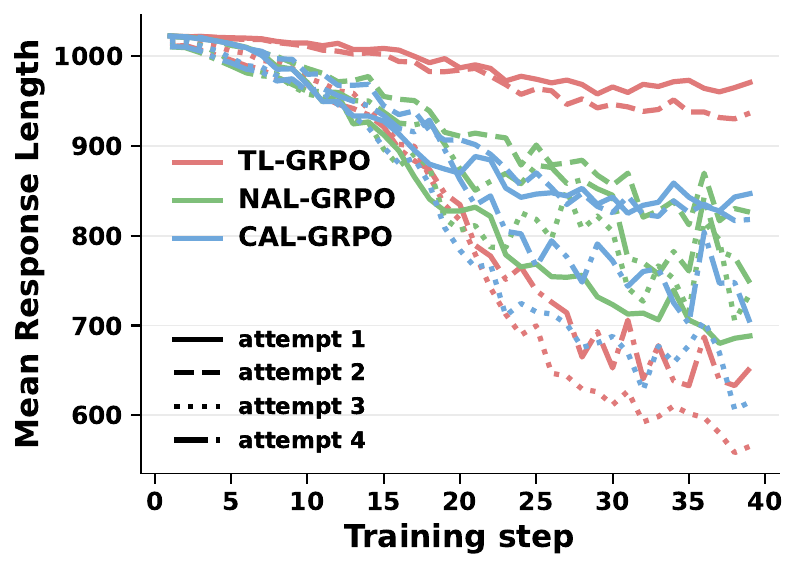}
        \caption{\textbf{Mean response length} per attempt vs.\ training step (baselines labeled in legend: TL-GRPO, NAL-GRPO, CAL-GRPO).}
        \label{fig:app_maze7_ver4_mean_len}
    \end{subfigure}
    \hfill
    \begin{subfigure}[t]{0.32\textwidth}
        \centering
        \includegraphics[width=\linewidth]{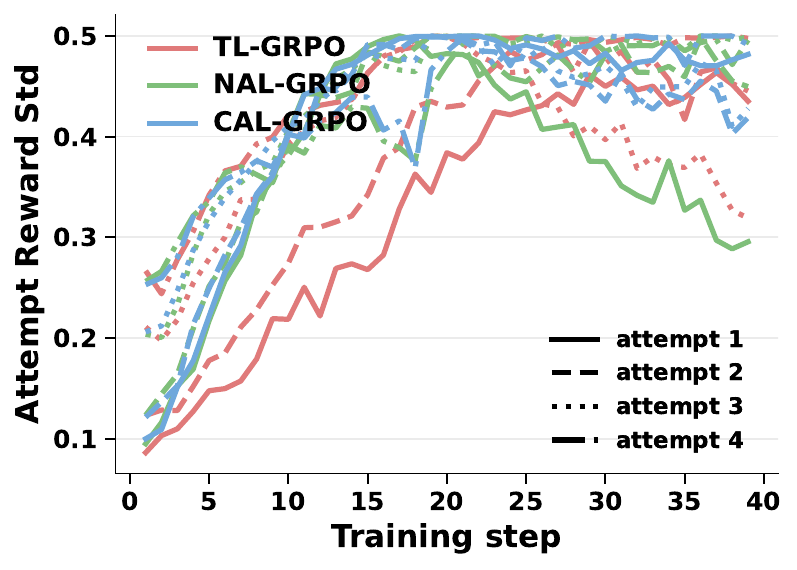}
        \caption{\textbf{Attempt reward standard deviation} within each GRPO group, shown separately for attempts 1--4.}
        \label{fig:app_maze7_ver4_reward_std}
    \end{subfigure}
    \hfill
    \begin{subfigure}[t]{0.32\textwidth}
        \centering
        \includegraphics[width=\linewidth]{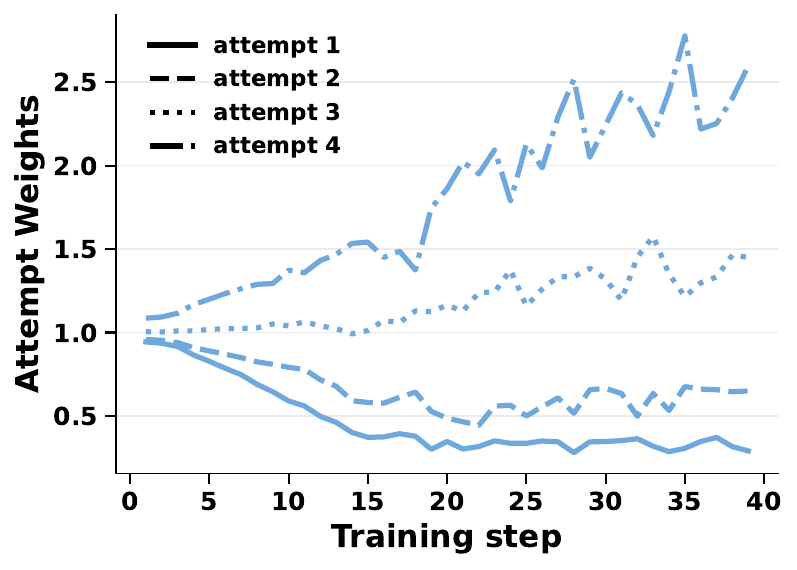}
        \caption{\textbf{Attempt weights} used by CAL-GRPO (future-only reweighting), for attempts 1--4 over training.}
        \label{fig:app_maze7_ver4_weights}
    \end{subfigure}

    \caption{\textbf{Additional metrics for Ver@4 training on 7$\times$7 Maze task.} (a) Response length dynamics per attempt. (b) Per-attempt reward variation used during the per-attempt normalization. (c) Learned future-only attempt reweighting in CAL-GRPO.}
    \label{fig:app_maze7_ver4_diagnostics}
\end{figure*}

\begin{figure*}[t!]
    \centering
    \begin{subfigure}[t]{0.32\textwidth}
        \centering
        \includegraphics[width=\linewidth]{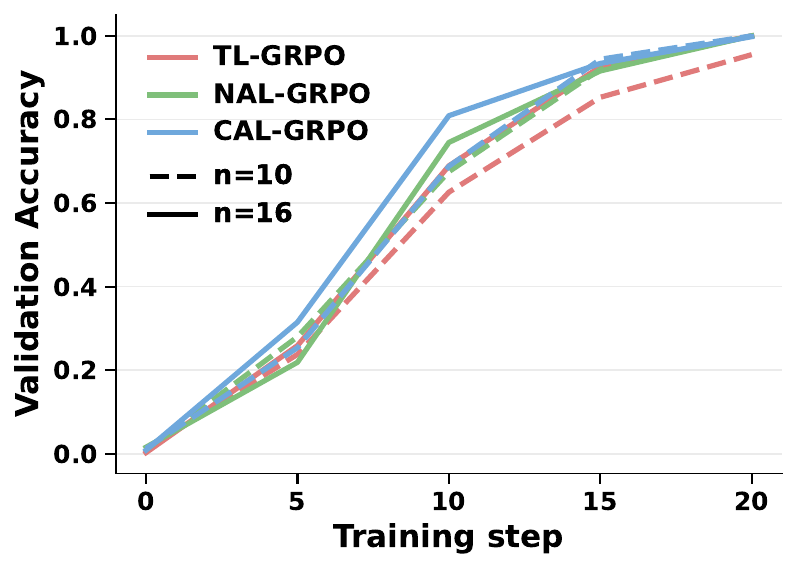}
        \caption{\textbf{Ver@2 validation accuracy on Maze} for rollout group sizes $N{=}10$ and $N{=}16$. Solid: $N{=}16$. Dashed: $N{=}10$. Larger groups improve sample efficiency, especially for CAL-GRPO early in training.}
        \label{fig:maze_ver2_groupsize_acc}
    \end{subfigure}
    \hfill
    \begin{subfigure}[t]{0.32\textwidth}
        \centering
        \includegraphics[width=\linewidth]{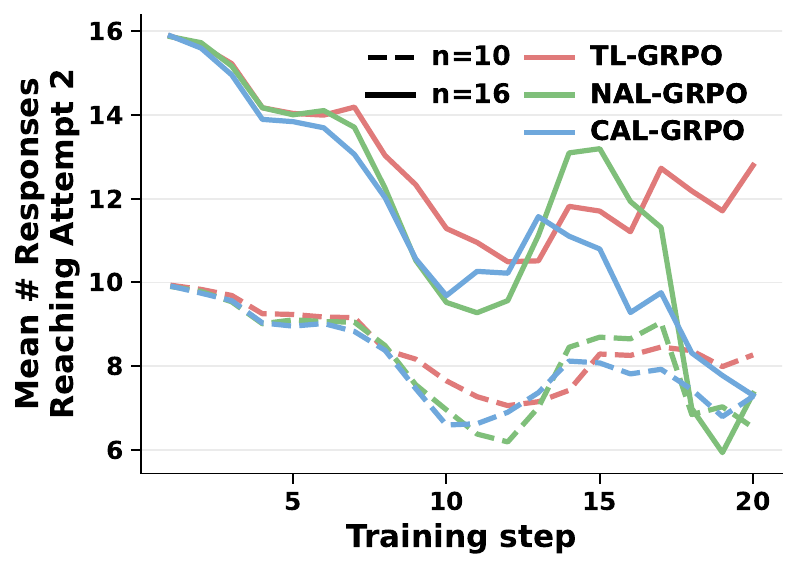}
        \caption{\textbf{Mean number of responses reaching attempt 2.} Solid: $N{=}16$. Dashed: $N{=}10$. Larger groups retain more attempt-2 samples in the current rollout group, improving the stability of per-attempt normalization and CAL-GRPO's weight estimates.}
        \label{fig:maze_ver2_groupsize_reached}
    \end{subfigure}
    \hfill
    \begin{subfigure}[t]{0.32\textwidth}
        \centering
        \includegraphics[width=\linewidth]{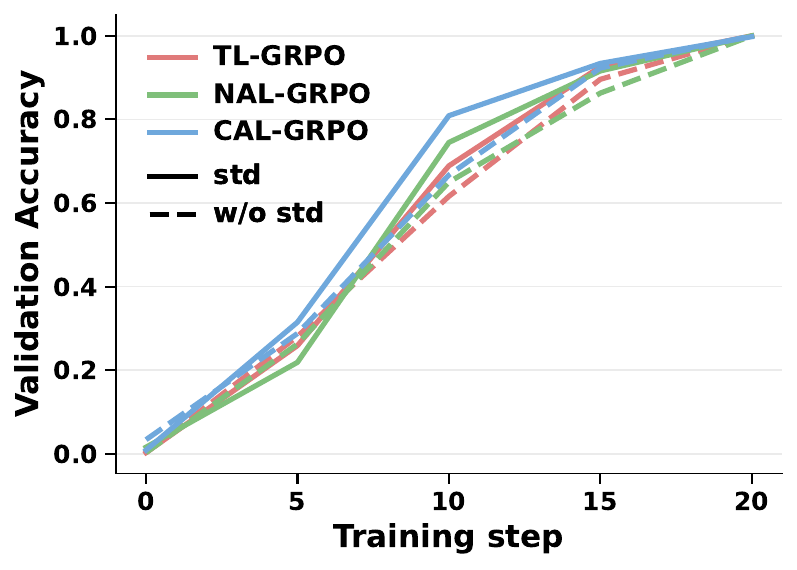}
        \caption{\textbf{Standard deviation normalization ablation at $N{=}16$.} Solid: with within-group standard deviation scaling. Dashed: without standard deviation scaling. Standard deviation normalization improves sample efficiency for all three methods.}
        \label{fig:maze_ver2_stdnorm}
    \end{subfigure}

    \caption{\textbf{Effect of rollout group size and standard deviation normalization on Ver@2 training for Maze.} We compare GRPO group sizes $N{=}10$ and $N{=}16$ for TL-GRPO, NAL-GRPO, and CAL-GRPO, and also compare the standard GRPO normalization against a variant without standard deviation scaling. \textbf{(a)} Larger groups improve early Ver@2, with the most noticeable gain for CAL-GRPO. \textbf{(b)} Since CAL-GRPO estimates attempt-wise weights from the current rollout group, larger $N$ provides more trajectories that reach attempt 2 and therefore yields more accurate estimates of the conditional success rate of attempt 2. \textbf{(c)} Standard deviation normalization improves learning for all three estimators and is used in our experiments. 
    }
    \label{fig:maze_ver2_groupsize}
\end{figure*}

%% file: sections/appendix/equivalence_theorem_proof.tex
\section{Proofs Section \ref{sect:main_results}}\label{app:main_results}
Fix a problem instance $(x,y)$ and suppress $(x,y)$ in notation.
Let $\tau$ denote a (random) multi-attempt rollout with stopping time $\mathcal T\in\{1,\dots,K\}$ and
terminal Verification@$K$ reward $R:=r_{\mathcal T}\in\{0,1\}$.
For attempt $i$, let $s_{i-1}$ be the pre-attempt state and $y_i\sim \pi_\theta(\cdot\mid s_{i-1})$.
Define the attempt-$i$ score (for the whole attempt output) as
\[
g_i \;:=\; \nabla_\theta \log \pi_\theta(y_i\mid s_{i-1})\cdot \mathbf 1\{\mathcal T\ge i\},
\qquad i\in[K],
\]
so $g_i\equiv 0$ on trajectories that terminate before attempt $i$.
Let $\mathcal F_i := \sigma(x,y_1,r_1,\dots,y_i,r_i)$ be the trajectory filtration up to attempt $i$.
Define the Doob conditional success probabilities
\[
Q_i \;:=\; \E\!\left[R \mid \mathcal F_i\right],\qquad i=0,1,\dots,K.
\]
(Under a deterministic verifier, $Q_i = r_i + (1-r_i)V_{\theta,i}(s_i)$ on $\{\mathcal T\ge i\}$, where
$V_{\theta,i}(s_i)=\P(\exists\,t\in\{i+1,\dots,K\}: r_t=1\mid s_i)$ and $V_{\theta,K}(\cdot)=0$.)

Now draw a group of $N\ge 2$ i.i.d.\ trajectories $\tau_1,\dots,\tau_N\sim p_\theta(\cdot\mid x)$.
Write $R_n$ and $Q_{n,i}$ for the corresponding terminal reward and Doob conditional success probability.
Let $\mathcal S_i := \{n\in[N]:\mathcal T(\tau_n)\ge i\}$ be the (random) set of trajectories that reach attempt $i$.
For notational convenience, define $M_i:=|\mathcal S_i|$ and assume $M_i\ge 2$ almost surely (or restrict to events where $M_i\ge 2$).
\begin{theorem}[Unbiasedness and strict Rao--Blackwell variance reduction at each attempt]
\label{thm:rb_per_attempt}
Fix $(x,y)$ and $N\ge 2$.
Define the trajectory-level (TL) and attempt-level (AL) RLOO estimators
\[
\widehat G_{\mathrm{RLOO}}^{\mathrm{TL}}(\theta;x,y)
~:=~
\sum_{i=1}^K \frac{M_i}{N}\cdot \widehat Z_i^{\mathrm{TL}},
\qquad
\widehat G_{\mathrm{RLOO}}^{\mathrm{AL}}(\theta;x,y)
~:=~
\sum_{i=1}^K \frac{M_i}{N}\cdot \widehat Z_i^{\mathrm{AL}},
\]
where, for each attempt index $i$,
\begin{align*}
\widehat Z_i^{\mathrm{TL}}
&:= \frac{1}{M_i}\sum_{n\in \mathcal S_i}
A_{n,i}^{\mathrm{TL}}\, g_{n,i},
\qquad
A_{n,i}^{\mathrm{TL}}
~:=~
R_n \;-\; \frac{1}{M_i-1}\sum_{\substack{m\in\mathcal S_i\\ m\neq n}} R_m,\\[2mm]
\widehat Z_i^{\mathrm{AL}}
&:= \frac{1}{M_i}\sum_{n\in \mathcal S_i}
A_{n,i}^{\mathrm{AL}}\, g_{n,i},
\qquad
A_{n,i}^{\mathrm{AL}}
~:=~
Q_{n,i} \;-\; \frac{1}{M_i-1}\sum_{\substack{m\in\mathcal S_i\\ m\neq n}} Q_{m,i}.
\end{align*}
Then:
\begin{enumerate}[label = \roman*]
\item Both estimators are unbiased for the Verification@$K$ policy gradient:
\[
\E\!\left[\widehat G_{\mathrm{RLOO}}^{\mathrm{TL}}(\theta;x,y)\right]
~=~
\E\!\left[\widehat G_{\mathrm{RLOO}}^{\mathrm{AL}}(\theta;x,y)\right]
~=~
\nabla_\theta \rho_\theta^{\mathrm{VK}}(x,y).
\]
\item 
Let $\mathcal G_i := \sigma\!\left(\mathcal F_{n,i}:n\in[N]\right)$ be the group filtration up to attempt $i$.
Then, for every $i\in[K]$,
\[
\E\!\left[\widehat Z_i^{\mathrm{TL}} \,\middle|\, \mathcal G_i\right]
~=~
\widehat Z_i^{\mathrm{AL}},
\qquad\text{and hence}\qquad
\Var(\widehat Z_i^{\mathrm{TL}})
~=~
\Var(\widehat Z_i^{\mathrm{AL}}) \;+\; \E\!\left[\Var(\widehat Z_i^{\mathrm{TL}}\mid \mathcal G_i)\right].
\]
In particular, $\Var(\widehat Z_i^{\mathrm{TL}})\ge \Var(\widehat Z_i^{\mathrm{AL}})$ for all $i$, and the inequality is \emph{strict} for a given $i$ whenever
$\P\!\left(\Var(R_1\mid \mathcal F_{1,i})>0\right)>0$ and $\P\!\left(\|g_{1,i}\|_2>0\right)>0$.
\end{enumerate}
\end{theorem}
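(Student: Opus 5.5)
Unbiasedness and variance reduction both come from one structural fact. The score $g_{n,i}$ and the membership of $n$ in $\mathcal S_i$ are $\mathcal F_{n,i}$-measurable, because $\{\mathcal T_n\ge i\}$ is determined by $r_{n,1},\dots,r_{n,i-1}$. We also have $\E[R_n\mid\mathcal G_i]=\E[R_n\mid\mathcal F_{n,i}]=Q_{n,i}$, since the trajectories are i.i.d. and the other trajectories carry no information about $\tau_n$. The plan is to prove part (ii) first and then derive part (i) from it.

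\textbf{Step 1 (conditional expectation identity).} The set $\mathcal S_i$, the count $M_i$, and every $g_{n,i}$ are $\mathcal G_i$-measurable. Hence in $\widehat Z_i^{\mathrm{TL}}=\frac{1}{M_i}\sum_{n\in\mathcal S_i}\big(R_n-\frac{1}{M_i-1}\sum_{m\in\mathcal S_i,m\ne n}R_m\big)g_{n,i}$ the only random factors that are not $\mathcal G_i$-measurable are the $R_m$. Linearity of conditional expectation, together with $\E[R_m\mid\mathcal G_i]=Q_{m,i}$, turns each $A_{n,i}^{\mathrm{TL}}$ into $A_{n,i}^{\mathrm{AL}}$. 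This gives $\E[\widehat Z_i^{\mathrm{TL}}\mid\mathcal G_i]=\widehat Z_i^{\mathrm{AL}}$. The variance decomposition then follows from the law of total variance, applied coordinatewise or as a covariance trace. Taking total variance of $\widehat G^{\mathrm{TL}}$ would need cross terms, which the theorem does not claim, so I will not attempt it.

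\textbf{Step 2 (unbiasedness).} Summing Step 1 over $i$ with the $\mathcal G_i$-measurable weights $M_i/N$ gives $\E[\widehat G^{\mathrm{TL}}]=\E[\widehat G^{\mathrm{AL}}]$, so it suffices to handle TL. Note that $\sum_i\frac{M_i}{N}\widehat Z_i^{\mathrm{TL}}=\frac1N\sum_i\sum_{n\in\mathcal S_i}A^{\mathrm{TL}}_{n,i}g_{n,i}$. For the baseline term, condition on the indicators of $\{\mathcal T_m\ge i\}$ for $m\ne n$, which determine $|\mathcal S_i\setminus\{n\}|$ when $n$ is a member. The baseline is then built from $R_m$ with $m\ne n$, which are independent of $\tau_n$ given $\mathcal T_n\ge i$. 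Multiplying by $\mathbf 1\{\mathcal T_n\ge i\}$ and using the score identity $\E[\nabla\log\pi_\theta(y_{n,i}\mid s_{n,i-1})\mid\mathcal F_{n,i-1}]=0$, the baseline contribution vanishes. One caveat: the baseline divides by $M_i-1$, and $M_i$ depends on $\mathcal T_n\ge i$. On the event $n\in\mathcal S_i$, however, $M_i-1$ equals the count of the other trajectories reaching attempt $i$, so it is independent of $\tau_n$ and the argument goes through. The remaining term is $\frac1N\sum_n\sum_i\E[R_n g_{n,i}]=\E[R\nabla\log p_\theta(\tau)]=\nabla\rho$ by \eqref{eq:logtraj-grad-det} and \eqref{eq:ver-k-reinforce-general}. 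Equivalently one can use the AL form together with \eqref{eqn:gradientAL}.

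\textbf{Step 3 (strictness).} I need $\E[\Var(\widehat Z_i^{\mathrm{TL}}\mid\mathcal G_i)]>0$. Given $\mathcal G_i$, the $R_n$ are conditionally independent. The $R_n$-coefficient in $\widehat Z_i^{\mathrm{TL}}$ is
\[
c_n=\frac1{M_i}\Big(g_{n,i}-\frac1{M_i-1}\sum_{m\in\mathcal S_i,m\ne n}g_{m,i}\Big),
\]
so the conditional variance is $\sum_n\Var(R_n\mid\mathcal F_{n,i})\|c_n\|^2$. I expect this to be the main obstacle, because $c_n$ can vanish through cancellation, for example when all scores in the group are equal. To handle it, I would exhibit an event of positive probability on which $\Var(R_1\mid\mathcal F_{1,i})>0$ and $c_1\ne0$. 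By i.i.d.-ness, the events $\{\Var(R_1\mid\mathcal F_{1,i})>0\}$ and $\{\|g_{1,i}\|>0\}$ for trajectory 1 are independent of the other trajectories. I would then use the freedom in the others, for instance all others stopping before attempt $i$ except one with a different score, or else argue via a positive second moment. If a clean argument fails, I would fall back on stating that strictness holds under the joint positivity condition $\P(\Var(R_1\mid\mathcal F_{1,i})>0,\ c_1\neq0)>0$.
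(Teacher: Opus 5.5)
Your Steps 1 and 2 are sound and essentially follow the paper. The paper's part (ii) is exactly your Step 1: replace each $R_n$ by $\E[R_n\mid\mathcal G_i]=Q_{n,i}$, then apply total variance. Its part (i) rests on the same tower-property and vanishing-baseline facts. You handle the baseline more carefully: noting that $M_i-1$ is a function of the other trajectories on $\{n\in\mathcal S_i\}$ is exactly what is needed. This uses the standing assumption $M_i\ge 2$ a.s., implicit in the $1/(M_i-1)$ normalization, so state it.

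The gap is Step 3. The theorem asserts strictness under two \emph{separate} conditions, $\P(\Var(R_1\mid\mathcal F_{1,i})>0)>0$ and $\P(\|g_{1,i}\|_2>0)>0$. Your fallback instead assumes the joint condition $\P(\Var(R_1\mid\mathcal F_{1,i})>0,\ c_1\neq 0)>0$, which is a different statement, so as written you do not prove the claim. Your concrete device also fails in general. For $i=1$ every trajectory reaches attempt $1$, so ``all others stop before attempt $i$ except one'' is impossible when $N\ge 3$. You are right that something must be argued here, since the paper only asserts it. But the stated hypotheses do suffice, and the two events need not even overlap.

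The missing idea is to compute the second moment and kill the cross term with the same leave-one-out trick as in your Step 2. Let $V_1:=\Var(R_1\mid\mathcal F_{1,i})$, which vanishes off $\{\mathcal T_1\ge i\}$. On that event $M_i=1+L$ and $c_1=(1+L)^{-1}(g_{1,i}-\bar g_{-1})$. Here $L:=|\mathcal S_i\setminus\{1\}|\ge 1$ (by $M_i\ge 2$) and $\bar g_{-1}:=L^{-1}\sum_{m=2}^N g_{m,i}$, and both depend only on $\tau_2,\dots,\tau_N$. By independence,
\begin{align*}
\E\big[V_1\|c_1\|_2^2\big]
&=\E\big[V_1\|g_{1,i}\|_2^2\big]\,\E\big[(1+L)^{-2}\big]
-2\big\langle \E[V_1 g_{1,i}],\,\E[(1+L)^{-2}\bar g_{-1}]\big\rangle\\
&\quad+\E[V_1]\,\E\big[(1+L)^{-2}\|\bar g_{-1}\|_2^2\big].
\end{align*}
The middle term is zero. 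On $\{\mathcal T_m\ge i\}$ one has $L=1+L_{-m}$ with $L_{-m}$ independent of $\tau_m$, so $\E[(1+L)^{-2}L^{-1}g_{m,i}]=\E[(2+L_{-m})^{-2}(1+L_{-m})^{-1}]\,\E[g_{m,i}]=0$. Hence $\E[V_1\|c_1\|_2^2]\ge N^{-2}\,\E[V_1]\,\E\|\bar g_{-1}\|_2^2$. The first factor is positive by the first hypothesis. For the second, $\bar g_{-1}=0$ a.s.\ would force $\sum_{m\ge 2}g_{m,i}=0$ a.s. Yet independence and $\E[g_{m,i}]=0$ give $\E\|\sum_{m\ge2}g_{m,i}\|_2^2=(N-1)\E\|g_{1,i}\|_2^2>0$ by the second hypothesis. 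Therefore your conditional-variance formula yields $\E[\operatorname{tr}\,\Var(\widehat Z_i^{\mathrm{TL}}\mid\mathcal G_i)]\ge\E[V_1\|c_1\|_2^2]>0$. The other trajectories' scores, entering through the leave-one-out baseline, supply a nonzero $c_1$ even where $g_{1,i}=0$.
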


\begin{proof}
\textbf{Part (i)}
Fix $i\in[K]$.
Because the trajectories are i.i.d.\ across $n$, and because leave-one-out centering preserves unbiasedness,
it suffices to show that for a single trajectory
\[
\E\!\left[\mathbf 1\{\mathcal T\ge i\}\, Q_i\, g_i\right]
~=~
\E\!\left[\mathbf 1\{\mathcal T\ge i\}\, R\, g_i\right],
\]
and that the sum over $i$ equals $\nabla_\theta \rho_\theta^{\mathrm{VK}}(x,y)$.
The first equality is the tower property:
since $g_i$ is $\mathcal F_i$-measurable and $Q_i=\E[R\mid \mathcal F_i]$,
\[
\E\!\left[\mathbf 1\{\mathcal T\ge i\}\, R\, g_i\right]
~=~
\E\!\left[\E\!\left[\mathbf 1\{\mathcal T\ge i\}\, R\, g_i \mid \mathcal F_i\right]\right]
~=~
\E\!\left[\mathbf 1\{\mathcal T\ge i\}\, g_i \, \E[R\mid \mathcal F_i]\right]
~=~
\E\!\left[\mathbf 1\{\mathcal T\ge i\}\, Q_i\, g_i\right].
\]
Summing over $i$ gives
\[
\sum_{i=1}^K \E\!\left[\mathbf 1\{\mathcal T\ge i\}\, R\, g_i\right]
~=~
\E\!\left[R \sum_{i=1}^K g_i\right]
~=~
\E\!\left[R\, \nabla_\theta \log p_\theta(\tau\mid x)\right]
~=~
\nabla_\theta \E[R]
~=~
\nabla_\theta \rho_\theta^{\mathrm{VK}}(x,y),
\]
where we used $\nabla_\theta\log p_\theta(\tau\mid x)=\sum_{i=1}^K g_i$ (verifier independent of $\theta$)
and the log-derivative identity.
Finally, since the group estimator averages $N$ i.i.d.\ copies and uses leave-one-out baselines
(which are conditionally independent of each centered trajectory),
the expectation is unchanged, proving (i).
ep 
\textbf{Part (ii)}
Fix $i\in[K]$ and consider the group sigma-field $\mathcal G_i=\sigma(\mathcal F_{n,i}:n\in[N])$.
Note that $\mathcal S_i$ and $M_i$ are $\mathcal G_i$-measurable (reachability is determined by $(r_{n,1},\dots,r_{n,i-1})$),
and $g_{n,i}$ is $\mathcal F_{n,i}$-measurable, hence also $\mathcal G_i$-measurable.
Moreover, for each $n$,
\[
\E[R_n \mid \mathcal G_i] \;=\; \E[R_n \mid \mathcal F_{n,i}] \;=\; Q_{n,i},
\]
because $R_n$ depends only on trajectory $n$ and trajectories are independent across $n$.
Therefore, conditioning $\widehat Z_i^{\mathrm{TL}}$ on $\mathcal G_i$ replaces each $R_n$ in the leave-one-out
advantage by $Q_{n,i}$ while leaving all other terms fixed:
\[
\E\!\left[\widehat Z_i^{\mathrm{TL}}\mid \mathcal G_i\right]
=
\frac{1}{M_i}\sum_{n\in\mathcal S_i}
\left(
\E[R_n\mid \mathcal G_i] - \frac{1}{M_i-1}\sum_{\substack{m\in\mathcal S_i\\m\neq n}} \E[R_m\mid \mathcal G_i]
\right) g_{n,i}
=
\widehat Z_i^{\mathrm{AL}}.
\]

Applying the law of total variance yields
\[
\Var(\widehat Z_i^{\mathrm{TL}})
=
\Var\!\Big(\E[\widehat Z_i^{\mathrm{TL}}\mid \mathcal G_i]\Big)
+
\E\!\left[\Var(\widehat Z_i^{\mathrm{TL}}\mid \mathcal G_i)\right]
=
\Var(\widehat Z_i^{\mathrm{AL}})
+
\E\!\left[\Var(\widehat Z_i^{\mathrm{TL}}\mid \mathcal G_i)\right],
\]
proving the non-strict inequality.
Moreover, $\E[\Var(\widehat Z_i^{\mathrm{TL}}\mid \mathcal G_i)]>0$ whenever, with positive probability,
there exists residual randomness in $R$ after conditioning on $\mathcal F_i$ and the attempt-$i$ score is nonzero.
A sufficient (and minimal) condition is:
$\P(\Var(R_1\mid \mathcal F_{1,i})>0)>0$ and $\P(\|g_{1,i}\|_2>0)>0$.
This implies strict inequality for that $ii$.
\end{proof}
\section{Proofs for Section \ref{sect:independent_attemtp}}\label{app:independent_attempt}
\begin{theorem}[CAL is unbiased under attempt-index independence; per-attempt Rao--Blackwell variance reduction]
\label{thm:independent-attempt}
Fix $(x,y)$ and suppress $(x,y)$ in notation. Assume Assumption~\ref{assump:index-only-assumption} holds.
Let $\rhoVK_\theta(x,y):=\E[R]$ denote the Verification@$K$ success probability.

Draw $N\ge 2$ i.i.d.\ trajectories $\tau_1,\dots,\tau_N\sim p_\theta(\cdot\mid x)$ and define
$\mathcal S_i:=\{n\in[N]:\mathcal T(\tau_n)\ge i\}$ and $M_i:=|\mathcal S_i|$, assuming $M_i\ge 2$ a.s.
For each $n\in\mathcal S_i$, define the leave-one-out estimate of the attempt-$i$ success probability
\[
\rho_{n,i}
~:=~
\frac{1}{M_i-1}\sum_{\substack{m\in\mathcal S_i\\ m\neq n}} r_{m,i},
\qquad
A^{\mathrm{CAL}}_{n,i}:=r_{n,i}-\rho_{n,i}.
\]
Define the CAL \emph{future-failure} weight
\begin{equation}
\label{eq:cal_weight_correct}
w^{\mathrm{CAL}}_{n,i}
~:=~
\prod_{j=i+1}^{K}\bigl(1-\rho_{n,j}\bigr),
\qquad i\in[K],
\end{equation}
(where for $j>i$ we interpret $\rho_{n,j}$ as the leave-one-out average over $\mathcal S_j$
excluding $n$ if $n\in\mathcal S_j$ and otherwise as the full average over $\mathcal S_j$; this makes
$\rho_{n,j}$ well-defined for all $n$ and $j$ and agrees with the above definition when $n\in\mathcal S_j$).
Define the per-attempt CAL term
\[
Z_i^{\mathrm{CAL}}
~:=~
\frac{1}{M_i}\sum_{n\in\mathcal S_i}
w^{\mathrm{CAL}}_{n,i}\,A^{\mathrm{CAL}}_{n,i}\,g_{n,i},
\qquad i\in[K],
\]
and the CAL group estimator (in the same scaling as the TL/AL definitions)
\begin{equation}
\label{eq:cal_estimator_scaled}
\widehat{G}_{\mathrm{RLOO}}^{\mathrm{CAL}}(\theta;x,y)
~:=~
\sum_{i=1}^{K}\frac{M_i}{N}\,Z_i^{\mathrm{CAL}}
~=~
\frac{1}{N}\sum_{i=1}^{K}\sum_{n\in\mathcal S_i}
w^{\mathrm{CAL}}_{n,i}\,A^{\mathrm{CAL}}_{n,i}\,g_{n,i}.
\end{equation}
Then:
\begin{align}
\label{eq:cal_unbiased}
\E\!\left[\widehat{G}_{\mathrm{RLOO}}^{\mathrm{CAL}}(\theta;x,y)\right]
~=~
\nabla_\theta \rhoVK_\theta(x,y).
\end{align}
\end{theorem}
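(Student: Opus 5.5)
The plan is to reduce CAL to the attempt-level estimator of Theorem~\ref{thm:rb_per_attempt}, with the calibration weight playing the role of the continuation probability. Under Assumption~\ref{assump:index-only-assumption} the continuation value no longer depends on the history. Write $\rho_j:=\rho_{\theta,j}(x,y)$ and $W_i:=\prod_{j=i+1}^K(1-\rho_j)$. Then on $\{\mathcal T\ge i\}$ we have $V_{\theta,i}(s_i)=1-W_i$, a deterministic constant, and hence
\[
Q_i=r_i+(1-r_i)(1-W_i)=(1-W_i)+W_i\,r_i .
\]

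Plugging this into Part~(i) of Theorem~\ref{thm:rb_per_attempt} gives
\[
\nabla_\theta\rhoVK_\theta=\sum_{i=1}^K \E\bigl[\mathbf 1\{\mathcal T\ge i\}(1-W_i+W_i r_i)\,g_i\bigr].
\]
The constant part drops out because $\E[g_i\mid \mathcal F_{i-1}]=0$: on $\{\mathcal T\ge i\}$ this is a score-function expectation. We are left with the target identity
\[
\nabla_\theta\rhoVK_\theta=\sum_{i=1}^K W_i\,\E\bigl[\mathbf 1\{\mathcal T\ge i\}\,r_i\,g_i\bigr].
\]
By exchangeability of the $N$ trajectories and the $1/N$ normalization in \eqref{eq:cal_estimator_scaled}, it then suffices to show, for a fixed $n$ and each $i$,
\[
\E\bigl[\mathbf 1\{n\in\mathcal S_i\}\,w^{\mathrm{CAL}}_{n,i}\,(r_{n,i}-\rho_{n,i})\,g_{n,i}\bigr]=W_i\,\E\bigl[\mathbf 1\{\mathcal T\ge i\}\,r_i\,g_i\bigr].
\]

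The key structural observation is that both $\rho_{n,i}$ and $w^{\mathrm{CAL}}_{n,i}$ are functions of the other trajectories $(\tau_m)_{m\neq n}$ only. This uses the leave-one-out convention: trajectory $n$ is always excluded from each average, and the sets $\mathcal S_j\setminus\{n\}$ are determined by the other trajectories. Since the trajectories are independent, both quantities are therefore independent of $(\mathbf 1\{n\in\mathcal S_i\}, r_{n,i}, g_{n,i})$.

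This independence handles the two terms separately. For the baseline term, $\E[w\,\rho_{n,i}\,\mathbf 1\{n\in\mathcal S_i\}g_{n,i}]=\E[w\,\rho_{n,i}]\cdot\E[\mathbf 1\{\mathcal T\ge i\}g_i]=0$ by the score identity. For the main term, $\E[w\,\mathbf 1 r g]=\E[w^{\mathrm{CAL}}_{n,i}]\cdot\E[\mathbf 1\{\mathcal T\ge i\}r_ig_i]$. So the whole claim reduces to
\[
\E\bigl[w^{\mathrm{CAL}}_{n,i}\bigr]=W_i .
\]

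This last identity is the main obstacle, because the factors $1-\rho_{n,j}$ for different $j$ are dependent: the set $\mathcal S_{j+1}$ is shaped by the attempt-$j$ outcomes. I would peel the product from the top index down using the group filtration $\mathcal G_j$. Given $\mathcal G_{j-1}$ (on which $\mathcal S_j$ is measurable), the outcomes $\{r_{m,j}\}_{m\in\mathcal S_j,\,m\neq n}$ are i.i.d.\ Bernoulli$(\rho_j)$. This holds because, by Assumption~\ref{assump:index-only-assumption}, the attempt-$j$ response is drawn from $\pi^{(j)}_\theta(\cdot\mid x)$ regardless of history. Hence $\E[1-\rho_{n,j}\mid\mathcal G_{j-1}]=1-\rho_j$, given the standing assumption $M_j\ge 2$ (or $|\mathcal S_j\setminus\{n\}|\ge1$), which makes the leave-one-out average well defined.

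Moreover, the factors $1-\rho_{n,i+1},\dots,1-\rho_{n,j-1}$ are all $\mathcal G_{j-1}$-measurable. So conditioning first on $\mathcal G_{K-1}$ replaces the last factor by $1-\rho_K$, then conditioning on $\mathcal G_{K-2}$ replaces the next factor, and so on down to index $i+1$. The tower property then yields $\E[w^{\mathrm{CAL}}_{n,i}]=W_i$, and summing over $i$ and $n$ gives \eqref{eq:cal_unbiased}. The only delicate point I expect to check is that this conditioning argument is not disturbed by the convention for $n\notin\mathcal S_j$ (the full average over $\mathcal S_j$). It should not be, since that average still involves only trajectories $m\neq n$ whose attempt-$j$ draws are fresh i.i.d.\ samples from $\pi^{(j)}_\theta$.
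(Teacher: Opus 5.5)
Your proof is correct. Its two load-bearing ingredients are the same as the paper's. The first is the affine form $Q_i=(1-W_i)+W_i r_i$ on $\{\mathcal T\ge i\}$; your $W_i$ is the paper's $u_i$. The second is the top-down peeling that gives $\E[w^{\mathrm{CAL}}_{n,i}\mid\mathcal G_i]=W_i$ from the conditionally i.i.d.\ Bernoulli$(\rho_j)$ attempt-$j$ outcomes given $\mathcal G_{j-1}$, which is exactly the paper's backward induction.

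What differs is how you assemble them. The paper stays at the group level. It notes $A^{\mathrm{AL}}_{n,i}=u_iA^{\mathrm{CAL}}_{n,i}$ (the constant $1-u_i$ cancels in the leave-one-out difference), concludes $\E[Z_i^{\mathrm{CAL}}\mid\mathcal G_i]=Z_i^{\mathrm{AL}}$, and then invokes unbiasedness of the oracle AL estimator. You instead first reduce the gradient to $\sum_i W_i\,\E[\mathbf 1\{\mathcal T\ge i\}r_ig_i]$ via the score identity, and then argue trajectory by trajectory. Under the stated convention, every $\rho_{n,j}$ is the average over $\mathcal S_j\setminus\{n\}$, so $(w^{\mathrm{CAL}}_{n,i},\rho_{n,i})$ is a function of $(\tau_m)_{m\neq n}$ alone. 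This kills the baseline term and factorizes the main term in one stroke.

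Each route buys something. The paper's yields a stronger per-attempt statement: the conditional identity exhibits $Z_i^{\mathrm{AL}}$ as the Rao--Blackwellization of $Z_i^{\mathrm{CAL}}$, which is what the theorem's title alludes to. It also does not care whether trajectory $n$ is excluded from the future factors, since $\E[\rho_{n,j}\mid\mathcal G_{j-1}]=\rho_j$ either way. Your route makes explicit why the leave-one-out baseline is harmless. That baseline involves the other trajectories' attempt-$i$ outcomes and is not $\mathcal G_{i-1}$-measurable. The paper's closing remark that ``its multiplier has conditional mean zero'' therefore silently relies on exactly the independence from trajectory $n$ that you isolate. Your route also bypasses the group-level AL estimator, using only the single-trajectory Doob identity.
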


\begin{proof}
\textbf{Unbiasedness: }
Let $\mathcal G_i:=\sigma(\mathcal F_{1,i},\dots,\mathcal F_{N,i})$ be the group filtration.

Assumption~\ref{assump:index-only-assumption} implies that there exist policies
$\bigl(\pi_\theta^{(i)}(\cdot\mid x)\bigr)_{i=1}^K$ such that
$p_\theta(y_i\mid s_{1:i-1})=\pi_\theta^{(i)}(y_i\mid x)$.
Hence, conditional on $x$, the attempt outputs $y_1,\dots,y_K$ are independent across $i$,
and so are the correctness indicators $(r_i)_{i=1}^K$.

Define the attempt-$i$ success probability (conditional on reaching attempt $i$)
\[
\rho_i ~:=~ \P(r_i=1\mid \mathcal T\ge i).
\]
Under Assumption~\ref{assump:index-only-assumption}, $\rho_i$ depends only on $(x,i,\theta)$ and not on the
content of the prior history.

Let
\[
u_i ~:=~ \P(r_{i+1}=0,\dots,r_K=0 \mid \mathcal T\ge i)
~=~ \prod_{j=i+1}^K (1-\rho_j),
\qquad i\in[K],
\]
(with $u_K:=1$ by the empty-product convention).
Then, on $\{\mathcal T\ge i\}$, the Doob conditional success probability satisfies
\begin{equation}
\label{eq:Qi_affine}
Q_i
~=~
\E[R\mid \mathcal F_i]
~=~
1-u_i(1-r_i)
~=~
(1-u_i)+u_i r_i.
\end{equation}

We claim that for each $n\in[N]$ and $i\in[K]$,
\begin{equation}
\label{eq:weight_condexp}
\E\!\left[w^{\mathrm{CAL}}_{n,i}\mid \mathcal G_i\right]
~=~
u_i
~=~
\prod_{j=i+1}^K (1-\rho_j).
\end{equation}
To see this, define $W_{n,t}:=\prod_{j=t}^K (1-\rho_{n,j})$ for $t\in\{i+1,\dots,K\}$, so that
$w^{\mathrm{CAL}}_{n,i}=W_{n,i+1}$.
We prove by backward induction that
\[
\E[W_{n,t}\mid \mathcal G_{t-1}]
~=~
\prod_{j=t}^K (1-\rho_j),
\qquad t=i+1,\dots,K.
\]
For the base case $t=K$, conditional on $\mathcal G_{K-1}$ the set $\mathcal S_K$ is fixed and, for each
$m\in\mathcal S_K$, $r_{m,K}$ is an i.i.d.\ Bernoulli$(\rho_K)$ draw independent of $\mathcal G_{K-1}$.
Thus $\rho_{n,K}$ is an average of i.i.d.\ Bernoulli$(\rho_K)$ variables (with or without leaving out $n$),
and therefore $\E[\rho_{n,K}\mid \mathcal G_{K-1}]=\rho_K$, i.e.\ $\E[1-\rho_{n,K}\mid \mathcal G_{K-1}]=1-\rho_K$,
which is exactly the desired identity at $t=K$.

For the induction step, assume the identity holds at $t+1$:
$\E[W_{n,t+1}\mid \mathcal G_t]=\prod_{j=t+1}^K(1-\rho_j)$.
Then, by the tower property,
\begin{align*}
\E[W_{n,t}\mid \mathcal G_{t-1}]
&=
\E\!\left[\E\!\left[(1-\rho_{n,t})W_{n,t+1}\mid \mathcal G_t\right]\middle|\mathcal G_{t-1}\right]\\
&=
\E\!\left[(1-\rho_{n,t})\,\E[W_{n,t+1}\mid \mathcal G_t]\middle|\mathcal G_{t-1}\right]\\
&=
\Bigl(\prod_{j=t+1}^K(1-\rho_j)\Bigr)\cdot \E[1-\rho_{n,t}\mid \mathcal G_{t-1}].
\end{align*}
The same argument as in the base case yields $\E[\rho_{n,t}\mid \mathcal G_{t-1}]=\rho_t$ (because the
attempt-$t$ outcomes among $\mathcal S_t$ are conditionally i.i.d.\ Bernoulli$(\rho_t)$ given $\mathcal G_{t-1}$),
so $\E[1-\rho_{n,t}\mid \mathcal G_{t-1}]=1-\rho_t$ and therefore
\[
\E[W_{n,t}\mid \mathcal G_{t-1}]
=
\prod_{j=t}^K (1-\rho_j).
\]
Setting $t=i+1$ gives \eqref{eq:weight_condexp}.

Using \eqref{eq:Qi_affine}, the AL leave-one-out advantage at attempt $i$ can be rewritten as
\[
A^{\mathrm{AL}}_{n,i}
~:=~
Q_{n,i}-\frac{1}{M_i-1}\sum_{\substack{m\in\mathcal S_i\\ m\neq n}}Q_{m,i}
~=~
u_i\Bigl(r_{n,i}-\frac{1}{M_i-1}\sum_{\substack{m\in\mathcal S_i\\ m\neq n}}r_{m,i}\Bigr)
~=~
u_i\,A^{\mathrm{CAL}}_{n,i},
\]
since the constant term $(1-u_i)$ cancels in the leave-one-out difference.
Consequently,
\begin{equation}
\label{eq:Zi_AL_affine}
Z_i^{\mathrm{AL}}
~=~
\frac{1}{M_i}\sum_{n\in\mathcal S_i} A^{\mathrm{AL}}_{n,i}g_{n,i}
~=~
u_i\cdot \frac{1}{M_i}\sum_{n\in\mathcal S_i} A^{\mathrm{CAL}}_{n,i}g_{n,i}.
\end{equation}

On the other hand, $A^{\mathrm{CAL}}_{n,i}g_{n,i}$ is $\mathcal G_i$-measurable, so by
\eqref{eq:weight_condexp},
\begin{align}
\E[Z_i^{\mathrm{CAL}}\mid \mathcal G_i]
&=
\E\!\left[\frac{1}{M_i}\sum_{n\in\mathcal S_i} w^{\mathrm{CAL}}_{n,i}A^{\mathrm{CAL}}_{n,i}g_{n,i}\middle|\mathcal G_i\right]\nonumber\\
&=
\frac{1}{M_i}\sum_{n\in\mathcal S_i} A^{\mathrm{CAL}}_{n,i}g_{n,i}\cdot
\E[w^{\mathrm{CAL}}_{n,i}\mid \mathcal G_i]\nonumber\\
&=
u_i\cdot \frac{1}{M_i}\sum_{n\in\mathcal S_i} A^{\mathrm{CAL}}_{n,i}g_{n,i}
~=~
Z_i^{\mathrm{AL}},
\label{eq:ZiCAL_condexp_equals_ALO}
\end{align}
where the last equality uses \eqref{eq:Zi_AL_affine}.
Taking expectations in \eqref{eq:ZiCAL_condexp_equals_ALO} yields
\begin{equation}
\label{eq:ZiCAL_expect_equals_ALO}
\E[Z_i^{\mathrm{CAL}}]=\E[Z_i^{\mathrm{AL}}],\qquad \forall i\in[K].
\end{equation}
From \eqref{eq:cal_estimator_scaled} and \eqref{eq:ZiCAL_expect_equals_ALO},
\[
\E\!\left[\widehat{G}_{\mathrm{RLOO}}^{\mathrm{CAL}}(\theta;x,y)\right]
=
\sum_{i=1}^K \E\!\left[\frac{M_i}{N}Z_i^{\mathrm{CAL}}\right]
=
\sum_{i=1}^K \E\!\left[\frac{M_i}{N}Z_i^{\mathrm{AL}}\right]
=
\E\!\left[\widehat{G}_{\mathrm{RLOO}}^{\mathrm{AL}}(\theta;x,y)\right].
\]
Finally, the standard policy-gradient/Doob argument gives
$\E[\widehat{G}_{\mathrm{RLOO}}^{\mathrm{AL}}]=\nabla_\theta \E[R]=\nabla_\theta \rhoVK_\theta(x,y)$:
indeed,
\[
\nabla_\theta \rhoVK_\theta(x,y)
=
\nabla_\theta \E[R]
=
\E\!\left[R\sum_{i=1}^K g_i\right]
=
\sum_{i=1}^K \E[Q_i g_i],
\]
and the leave-one-out baseline used to form $Z_i^{\mathrm{AL}}$ does not change the expectation because its
multiplier has conditional mean zero.
This proves \eqref{eq:cal_unbiased}.
\end{proof}